\theoremstyle{plain}
\newtheorem{theorem}{Theorem}[section]
\newtheorem{lemma}[theorem]{Lemma}
\theoremstyle{definition}
\newtheorem{definition}[theorem]{Definition}
\theoremstyle{remark}
\newcommand{\bI}{{\bf I}}
\newcommand{\bII}{{\bf II}}
\newcommand{\twopartdef}[4]
{ 
        \left\{
                \begin{array}{ll}
                        #1 & \mbox{if } #2 \\
                        #3 & \mbox{if } #4
                \end{array}
        \right.
}
\icmltitlerunning{Log Neural Controlled Differential Equations}
\begin{document}

\twocolumn[
\icmltitle{Log Neural Controlled Differential Equations: \\ The Lie Brackets Make a Difference}



\icmlsetsymbol{equal}{*}

\begin{icmlauthorlist}
\icmlauthor{Benjamin Walker}{Ox}
\icmlauthor{Andrew D. McLeod}{Ox}
\icmlauthor{Tiexin Qin}{HK}
\icmlauthor{Yichuan Cheng}{HK}
\icmlauthor{Haoliang Li}{HK}
\icmlauthor{Terry Lyons}{Ox}
\end{icmlauthorlist}

\icmlaffiliation{Ox}{Mathematical Institute, University of Oxford, UK}
\icmlaffiliation{HK}{Department of Electrical Engineering, City University of Hong Kong}

\icmlcorrespondingauthor{Benjamin Walker}{mlbenjamin walker@gmail.com}


\vskip 0.3in
]



\printAffiliationsAndNotice{}  

\begin{abstract}
The vector field of a controlled differential equation (CDE) describes the relationship between a \emph{control} path and the evolution of a \emph{solution} path. 
Neural CDEs (NCDEs) treat time series data as observations from a control path, parameterise a CDE's vector field using a neural network, and use the solution path as a continuously evolving hidden state. 
As their formulation makes them robust to irregular sampling rates, NCDEs are a powerful approach for modelling real-world data.
Building on neural rough differential equations (NRDEs), we introduce Log-NCDEs, a novel, effective, and efficient method for training NCDEs. 
The core component of Log-NCDEs is the Log-ODE method, a tool from the study of rough paths for approximating a CDE's solution.
Log-NCDEs are shown to outperform NCDEs, NRDEs, the linear recurrent unit, S5, and MAMBA on a range of multivariate time series datasets with up to $50{,}000$ observations.
\end{abstract}

\section{Introduction}

\subsection{Multivariate Time Series Modelling}

Neural controlled differential equations (NCDEs) offer a number of advantages for modelling real-world multivariate time series. These include being robust to irregular sampling rates and decoupling the number of forward passes through their neural network from the number of observations in the time series. However, there exists a gap in performance between NCDEs and current state-of-the-art approaches for time series modelling, such as S5, the linear recurrent unit (LRU), and MAMBA \citep{S5, orvieto2023resurrecting, gu2023mamba}. 

This paper demonstrates that, on a range of multivariate time series benchmarks, NCDEs can outperform current state-of-the-art approaches by utilising a tool from the study of rough paths, the Log-ODE method. We refer to this new approach as Log-NCDEs\footnote{\url{https://github.com/Benjamin-Walker/Log-Neural-CDEs}}.

\subsection{Neural Controlled Differential Equations}

Let $\{(t_i,x_i)\}_{i=0}^n$ denote a set of observations from a multivariate time series, where $x_i\in\mathbb{R}^{v-1}$. Let $X:[t_0,t_n]\rightarrow \mathbb{R}^{v}$ be a continuous interpolation, such that $X_{t_i}=(t_i, x_i)$, where subscript on time-dependent variables denotes evaluation. Let $h_t\in\mathbb{R}^u$ and $z_t\in\mathbb{R}^w$ be the NCDE's hidden state and output at time $t$ respectively. Let $\xi_{\phi}:\mathbb{R}^{v}\rightarrow\mathbb{R}^u$ and $f_{\theta}:\mathbb{R}^u\rightarrow\mathbb{R}^{u \times v}$ be neural networks, and $l_{\psi}:\mathbb{R}^u\rightarrow\mathbb{R}^{w}$ be a linear map, where $\phi$, $\theta$, and $\psi$ represent the learnable parameters. A NCDE is defined by
\begin{equation}
\begin{aligned}
\label{eq:ncde}
    h_{t_0} &= \xi_{\phi}(t_0, x_0), \\  h_t &= h_{t_0} + \int_{t_0}^t f_{\theta}(h_s)\text{d} X_s, \\ z_t&=l_{\psi}(h_t),
\end{aligned}
\end{equation}
for $t\in[t_0,t_n]$, where $f_{\theta}(h_s)\text{d} X_s$ is matrix-vector multiplication \citep{kidger2020neuralcde}. Details on the regularity required for existence and uniqueness of the solution to \eqref{eq:ncde} can be found in Appendix \ref{sec:app_ex_uniq}. By an extension of the Picard-Lindel\"of Theorem, a sufficient condition is $X$ being of bounded variation and $f_{\theta}$ being Lipschitz continuous \citep[Theorem 1.3]{lyons2007differential}.

NCDEs are an attractive option for modelling multivariate time series. They are universal approximators of continuous real-valued functions on time series data \citep[Theorem 3.9]{kidger2022neural}. Additionally, since they interact with time series data through $X$, NCDEs are unaware of when the time series was observed. This makes them robust to irregular sampling rates. Furthermore, the number of forward passes through $f_{\theta}$ when evaluating \eqref{eq:ncde} is controlled by the differential equation solver. This is opposed to recurrent models, where it is controlled by the number of observations $n$. By decoupling the number of forward passes through their neural network from the number of observations in the time series, NCDEs can mitigate exploding or vanishing gradients on highly-sampled time series. 

To make use of the techniques developed for training neural ordinary differential equations (ODEs) \citep{ChenRBD18}, NCDEs are typically rewritten as an ODE,
\begin{equation}
\label{eq:ncde_ode}
    \tilde{h}_t = \tilde{h}_{t_0} + \int_{t_0}^t g_{\theta, X}(\tilde{h}_s)\text{d}s,
\end{equation}
where $\tilde{h}_{t_0}=h_{t_0}$. \citet{kidger2020neuralcde} proposed taking $X$ to be a differentiable interpolation and
\begin{equation}
\label{eq:cde_diff}
    g_{\theta, X}(\tilde{h}_s) = f_{\theta}(\tilde{h}_s)\frac{\text{d} X}{\text{d} s}.
\end{equation}

\subsection{Neural Rough Differential Equations}

Neural rough differential equations (NRDEs) are based on the Log-ODE method, which is discussed briefly here, and in full detail in Section \ref{sec:math}. 

Given a set of intervals $\{[r_i, r_{i+1}]\}_{i=0}^{m-1}$ satisfying $t_0=r_0<\ldots< r_m=t_n$, the Log-ODE method replaces a CDE with an ODE on each interval. For a depth$-N$ method, the ODE on $[r_i,r_{i+1}]$ is defined by 
\begin{equation}
\label{eq:logode}
    g_{\theta,X}(\tilde{h}_s) = \bar{f}_{\theta}\left(\tilde{h}_s\right)\frac{\log(S^{N}(X)_{[r_i,r_{i+1}]})}{r_{i+1}-r_i},
\end{equation}
where $\bar{f}_{\theta}$ is constructed using the iterated Lie brackets of $f_{\theta}$ and $\log(S^{N}(X)_{[r_i,r_{i+1}]})$ is the depth-$N$ truncated log-signature of X over $[r_i,r_{i+1}]$ \cite{Boutaib2013}. Informally, $\bar{f}_{\theta}$ is a high order description of the vector field $f_{\theta}$ and $\log(S^{N}(X)_{[r_i,r_{i+1}]})$ is a high order description of the control path $X$ over $[r_i,r_{i+1}]$. Note that when using \eqref{eq:cde_diff}, $\tilde{h}_t$ is exactly $h_t$ for all $t\in[t_0,t_n]$, whereas when using \eqref{eq:logode}, $\tilde{h}_t$ is an approximation of $h_t$ when $t\in\{r_i\}_{i=1}^m$.

NRDEs replace \eqref{eq:cde_diff} with \eqref{eq:logode}, but instead of calculating $\bar{f}_{\theta}$ using the iterated Lie brackets of $f_{\theta}$, it is treated as a neural network $\bar{f}_{\theta}:\mathbb{R}^u\rightarrow\mathbb{R}^{u\times \beta(v,N)}$, where $\beta(v,N)$ is the dimension of a depth$-N$ truncated log-signature of a $v$ dimensional path. By neglecting the structure of $\bar{f}_{\theta}$, NRDEs are able to reduce the computational burden of evaluating the vector field, at the cost of increasing the output dimension of the neural network.

Compared to NCDEs, NRDEs can reduce the number of forward passes through the network while evaluating the model, as the vector field is autonomous on each interval $[r_i,r_{i+1}]$. This has been shown to lead to improved classification accuracy, alongside reduced time and memory-usage, on time series with up to 17,000 observations \citep{morrill2021neuralrough}. Furthermore, as it is no longer necessary to apply a differentiable interpolation to the time series data, NRDEs are applicable to a wider range of input signals. 
\subsection{Contributions}


This paper introduces Log-NCDEs, which build on NRDEs by constructing $\bar{f}_{\theta}$ using the iterated Lie brackets of a NCDE's vector field, $f_{\theta}$. For depth's $N\geq2$, this change drastically reduces the output dimension of the model's neural network, without impacting the model's expressivity. Furthermore, it improves model performance on every dataset considered in this paper. Calculating the Lie brackets requires that $f_{\theta}$ satisfies a regularity constraint, specifically being $\mathrm{Lip}(\gamma)$ for $\gamma\in(N-1,N]$. Section \ref{sec:lipgammaNN} presents a novel theoretical result bounding the $\mathrm{Lip}(\gamma)-$norm for a class of fully connected neural networks when $1<\gamma\leq 2$. Following this, Section \ref{sec:liebracketNN} details how to efficiently calculate the Lie brackets of a $\mathrm{Lip}(\gamma)$ neural network using standard machine learning tools. The paper concludes by showing that, over a range of multivariate time series benchmarks, Log-NCDEs outperform NCDEs, NRDEs, S5, LRU, and MAMBA.

\section{Mathematical Background}
\label{sec:math}

The following section provides a thorough mathematical description of the Log-ODE method. It will introduce $\mathrm{Lip}(\gamma)$ regularity, the Lie bracket of two vector fields, and the signature and log-signature of a path, alongside their respective spaces, the tensor algebra and the free Lie algebra.

\subsection{The Tensor Algebra}

\begin{definition}
    Let $U$, $V$, and $W$ be vector spaces. The tensor product space $U\otimes V$ is the unique (up to isomorphism) space such that for all bilinear functions $\kappa:U\times V \rightarrow W$ there exists a unique linear map $\tau:U\otimes V \rightarrow W$, such that $\kappa = \tau \circ \otimes$ \citep[Chapter 14]{roman2007advanced14}. 
\end{definition}

As an example, let $V=\mathbb{R}^2$ and $W=\mathbb{R}^3$. In this case, the tensor product is the outer product of the two vectors, and the resulting tensor product space is the space of $2\times 3$ matrices, $\mathbb{R}^2\otimes\mathbb{R}^3 = \mathbb{R}^{2\times 3}$. The tensor product of $v\in \mathbb{R}^2$ and $w\in \mathbb{R}^3$ is defined by
\begin{equation}
    v\otimes w = \begin{bmatrix} a \\ b \end{bmatrix} \otimes \begin{bmatrix} c \\ d \\ e \end{bmatrix} = \begin{bmatrix} ac & ad & ae \\ bc & bd & be \end{bmatrix},
\end{equation}
where any bilinear function $\kappa(v,w)$ can be written as a linear function $\tau(v\otimes w)$. 

\begin{definition}
     The tensor algebra space is the space
    \begin{equation}
        T((V)) = \{x=(x^0,x^1,\ldots) | x^i \in V^{\otimes i}\},
    \end{equation}
    where $V^{\otimes 0}=\mathbb{R}$, $V^{\otimes 1} = V$, and $V^{\otimes j} = V \otimes V^{\otimes j-1}$ \citep[Chapter 14]{roman2007advanced14}.
\end{definition}
Details on the choice of norm for $V^{\otimes i}$ when $V$ is a complete normed vector space, i.e. a Banach space, can be found in Appendix \ref{sec:app_norm}.

\subsection{$\mathrm{Lip}(\gamma)$ Functions}
\label{sec:lipgamma}

Let $V$ and $W$ be Banach spaces and $\mathbf{L}(V, W)$ denote the space of linear mappings from $V$ to $W$ equipped with the operator norm.
\begin{definition}
    A linear map $l\in\mathbf{L}(V^{\otimes j}, W)$ is $j$ symmetric if for all $v_1 \otimes \cdots \otimes v_j \in V^{\otimes j}$ and all bijective functions $p:\{1, \ldots, j\}\rightarrow \{1, \ldots, j\}$ \citep[Chapter 14]{roman2007advanced14},
    \begin{equation}
        l(v_1 \otimes \ldots \otimes v_j) = l(v_{p(1)} \otimes \cdots \otimes v_{p(j)}).
    \end{equation}
Let $\mathbf{L}_s(V^{\otimes j}, W)$ denote the set of all $j$ symmetric linear maps. 
\end{definition}
\begin{definition}
\label{def:lipgamma}
    Let $\gamma>0$, $k\in\mathbb{Z}$ such that $\gamma \in (k, k+1]$, $F$ be a closed subset of $V$, and $f^0:F\rightarrow W$. For $j\in\{1,\ldots,k\}$, let $f^j:F\rightarrow \mathbf{L}_s(V^{\otimes j}, W)$. The collection $(f^0, f^1, \ldots, f^k)$ is an element of $\text{Lip}(\gamma)$ if there exists $M\geq0$ such that for $j\in\{0,\ldots,k\}$,
    \begin{equation}
    \label{eq:lipbound1}
        \sup_{x\in F}||f^j(x)||_{\mathbf{L}(V^{\otimes j}, W)} \leq M,
    \end{equation}
    and for $j\in\{0,\ldots,k\}$, all $x,y \in F$, and each $v \in V^{\otimes j}$ \citep{EMStein},
    \begin{equation}
    \label{eq:lipbound2}
        \frac{\Big|\Big|f^j(y)(v)-\sum_{l=0}^{k-j}\frac{f^{j+l}(x)(v \otimes (y-x)^{\otimes l})}{l!}\Big|\Big|_{W}}{|x-y|_V^{\gamma-j}} \leq M.
    \end{equation}
\end{definition}
If $f=(f^0, f^1, \ldots, f^k)\in\text{Lip}(\gamma)$, then the $\text{Lip}(\gamma)-$norm, denoted $||f||_{\text{Lip}(\gamma)}$, is the smallest $M$ for which \eqref{eq:lipbound1} and \eqref{eq:lipbound2} hold. When $0<\gamma \leq 1$, then $k=0$ and $f^0\in\text{Lip}(\gamma)$ implies $f^0$ is bounded and $\gamma-$Hölder continuous. When $\gamma=1$, then $f^0$ is bounded and Lipschitz. In this paper, we are concerned with the regularity of vector fields on $\mathbb{R}^u$. In this case, $f\in\mathrm{Lip}(\gamma)$ for $\gamma \in (k, k+1]$ implies that $f$ is bounded, has $k$ bounded derivatives, and the $k^{\text{th}}$ derivative satisfies
\begin{equation}
    ||D^kf(y)-D^kf(x)|| \leq M|x-y|^{\gamma-k},
\end{equation}
for all $x,y\in\mathbb{R}^u$.

\subsection{The Free Lie Algebra}
\label{sec:liebracket}
\begin{definition}
    A Lie algebra is a vector space $V$ with a bilinear map $[\cdot,\cdot]:V\times V\rightarrow V$ satisfying $[w,w]=0$ and the Jacobi identity,
    \begin{equation}
        [[x,y],z]+[[y,z],x]+[[z,x],y] = 0,
    \end{equation}
    for all $w,x,y,z\in V$. The map $[\cdot,\cdot]$ is called the Lie bracket \citep{reutenauer1993free}.
\end{definition}
Any associative algebra, $(V,\times)$, has a Lie bracket structure with Lie bracket defined by 
\begin{equation}
    [x, y] = x\times y - y\times x,
\end{equation}
for all $x,y \in V$. For example, $V=\mathbb{R}^{n\times n}$ with the matrix product. For another example, consider the vector space of infinitely differentiable functions from $\mathbb{R}^u$ to $\mathbb{R}^u$ with pointwise addition, denoted $C^{\infty}(\mathbb{R}^u,\mathbb{R}^u)$. This space is a Lie algebra when equipped with the Lie bracket
\begin{equation}
    [a, b](x) = J_b(x)a(x) - J_a(x)b(x),
\end{equation}
for $a,b\in C^{\infty}(\mathbb{R}^u,\mathbb{R}^u)$ and $x\in\mathbb{R}^u$, where $J_a(x)\in\mathbb{R}^{u\times u}$ is the Jacobian of $a$ with entries given by $J_{a}^{ij}(x)=\partial_ja^i(x)$ for $i,j\in\{1,\ldots,u\}$ \citep{kirillov}.

\begin{definition}
\label{def:freeliealgebra}
    Let $A$ be a non-empty set, $L_0$ be a Lie algebra, and $\phi:A\rightarrow L_0$ be a map. The Lie algebra $L_0$ is said to be the free Lie algebra generated by $A$ if for any Lie algebra $L$ and any map $f:A\rightarrow L$, there exists a unique Lie algebra homomorphism $g:L_0\rightarrow L$ such that $g \circ \phi = f$ \citep{reutenauer1993free}.
\end{definition}
The free Lie algebra generated by $V$ is the space
\begin{equation}
    \mathfrak{L}((V)) = \{(l_0,l_1,\ldots):l_i\in L_i\},
\end{equation}
where $L_0=0$, $L_1=V$, and $L_{i+1}$ is the span of $[v,l]$ for $v\in V$ and $l\in L_i$.

\subsection{The Log-Signature}
\label{sec:logsig}

Let $X:[t_0,t_n]\rightarrow V$ have bounded variation and define 
\begin{equation}
    X^n_{[t_0,t_n]} = \underbrace{\int\cdots\int}_{\substack{u_1<\cdots<u_n \\ u_i\in [t_0,t_n]}}\text{d}X_{u_1}\otimes \cdots \otimes \text{d}X_{u_n} \in V^{\otimes n}.
\end{equation}
The signature of the path $X$ is 
\begin{equation}
    S(X)_{[t_0,t_n]}=(1,X^1_{[t_0,t_n]},\ldots,X^n_{[t_0,t_n]},\ldots) \in \tilde{T}((V)),
\end{equation}
where $\tilde{T}((V))=\{x\in T((V)) | x=(1,\ldots)\}$ \citep{lyons2007differential}. The depth-$N$ truncated signature is defined as $S^N(X)_{[t_0,t_n]}=(1,X^1_{[t_0,t_n]},\ldots,X^N_{[t_0,t_n]}) \in \tilde{T}^N(V)$. The signature is an infinite dimensional vector which describes the path $X$ over the interval $[t_0,t_n]$. In fact, assuming $X$ contains time as a channel, linear maps on $S(X)_{[t_0,t_n]}$ are universal approximators for continuous, real-valued functions of $X$ \citep{Lyons2014, Arribas2018DerivativesPU}. This property of the signature relies on the shuffle-product identity, which states that polynomials of the elements in a truncated signature can be rewritten as linear maps on the signature truncated at greater depth \citep{Ree1958LieEA, SigPrimer}. A consequence of the shuffle-product identity is that not every term in the signature provides new information about the path $X$. The transformation which removes the information redundancy is the logarithm.

\begin{definition}
    For $\mathbf{x}\in\tilde{T}((V))$, the logarithm is defined by 
    \begin{equation}
        \log(\mathbf{x}) = \log(1+\mathbf{t}) = \sum_{n=1}^\infty \frac{(-1)^n}{n}\mathbf{t}^{\otimes n},
    \end{equation}
    where $\mathbf{t}=(0,x^1,x^2,\ldots)$ \citep{reutenauer1993free}.
\end{definition}
It was shown by \citet{Chen1957IntegrationOP} that
\begin{equation}
    \log(S(X)_{[t_0,t_n]}) \in \mathfrak{L}((V)).
\end{equation}
This result plays a crucial role in the Log-ODE method. 

\subsection{The Log-ODE Method}
\label{sec:logode}

The vector field of a CDE is typically thought of as a matrix-valued function $f:\mathbb{R}^u\rightarrow\mathbb{R}^{u\times v}$. An equivalent formulation is $f$ being a linear map acting on $\text{d}X\in\mathbb{R}^{v}$ and returning a vector field on $\mathbb{R}^u$. In other words, $f(\cdot)\text{d}X:\mathbb{R}^u\rightarrow\mathbb{R}^u$. This formulation will prove more useful in the following section.

Let $X:[t_0,t_n]\rightarrow\mathbb{R}^v$ be a continuous path. The (truncated) log-signature is a map which takes $X$ to the (truncated) free Lie algebra $\mathfrak{L}^N(\mathbb{R}^v)$. If $f(\cdot)\text{d}X$ is restricted to smooth vector fields on $\mathbb{R}^u$, then $f(\cdot)\text{d}X$ is a linear map from $\mathbb{R}^v$ to the Lie algebra $C^{\infty}(\mathbb{R}^u, \mathbb{R}^u)$. By definition \ref{def:freeliealgebra}, there exists a unique linear map $\bar{f}$ from $\mathfrak{L}^N(\mathbb{R}^v)$ to the smooth vector fields on $\mathbb{R}^u$. Figure \ref{fig:logsig_triangle} is a schematic diagram of this relationship. Since $\bar{f}$ is a Lie algebra homomorphism, it can be defined recursively by 
\begin{equation}
\label{eq:logoderecurs1}
\bar{f}(\cdot)a = f(\cdot)a, \;\; a\in \mathbb{R}^v
\end{equation}
and 
\begin{equation}
\label{eq:logoderecurs2}
\bar{f}(\cdot)[a,b] = [\bar{f}(\cdot)a,\bar{f}(\cdot)b]
\end{equation}
for $[a,b]\in\mathfrak{L}^N(\mathbb{R}^v)$.
\begin{figure}
    \centering
    \includegraphics[width=\linewidth]{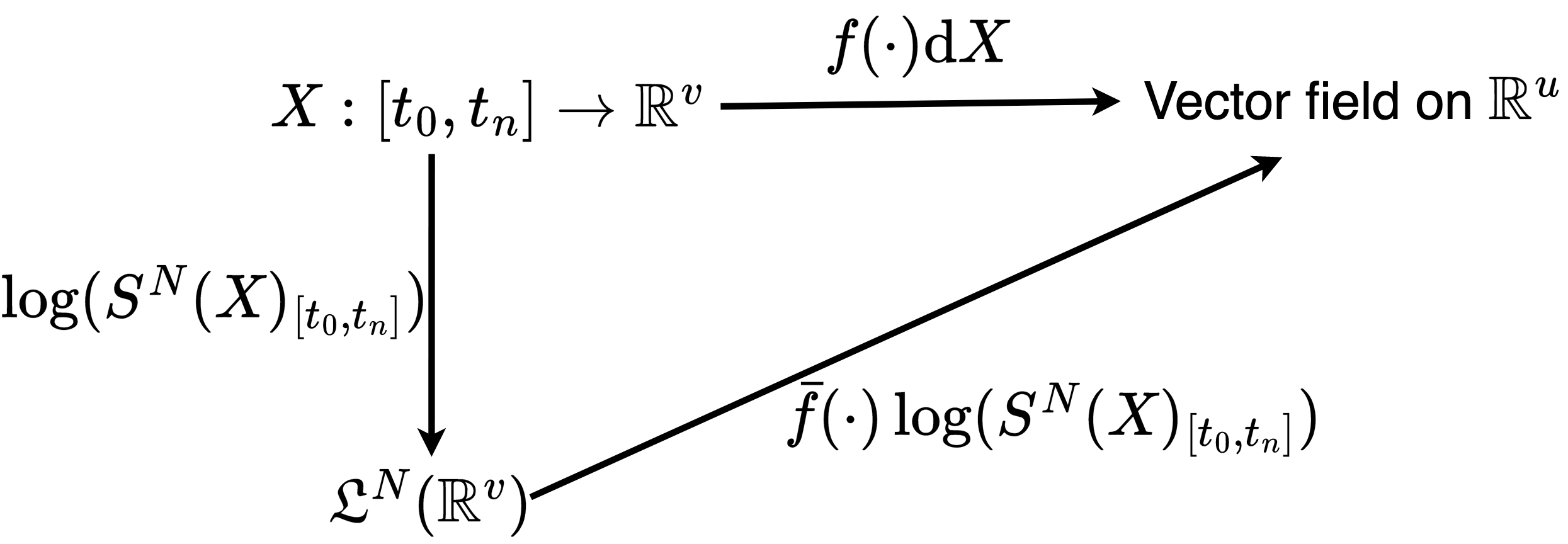}
    \caption{A schematic diagram of the Log-ODE method.}
    \label{fig:logsig_triangle}
\end{figure}

Over an interval, the Log-ODE method approximates a CDE using an autonomous ODE constructed by applying the linear map $\bar{f}$ to the truncated log-signature of the control, as seen in \eqref{eq:Log-NCDE} \citep{Lyons2014}. There exist theoretical results bounding the error in the Log-ODE method's approximation, including when the control and solution paths live in infinite dimensional Banach spaces \citep{Boutaib2013}. However, for a given set of intervals, the series of vector fields $\{g_{X}(\cdot)\}_{N=1}^{\infty}$ is not guaranteed to converge. In practice, $N$ is typically chosen as the smallest $N$ such that a reasonably sized set of intervals $\{r_i\}_{i=0}^m$ gives an approximation error of the desired level. A recent development has been the introduction of an algorithm which adaptively updates $N$ and $\{r_i\}_{i=0}^m$ \cite{bayer2023adaptive}. 

\section{Method}

\subsection{Log Neural Controlled Differential Equations}
\label{sec:logncde}

Log-NCDEs use the same underlying model as NRDEs
\begin{equation}
\label{eq:Log-NCDE}
\begin{aligned}
    h_t &= h_{t_0} + \int_{t_0}^t g_{\theta, X}(h_s)\text{d}s, \\ g_{\theta, X}(h_s) &= \bar{f}_{\theta}\left(h_s\right)\frac{\log(S^{N}(X)_{[r_i,r_{i+1}]})}{r_{i+1}-r_i},
\end{aligned}
\end{equation}
but with two major changes. First, instead of parameterising $\bar{f}_{\theta}$ using a neural network, it is constructed using the iterated Lie brackets of a NCDE's neural network, $f_{\theta}$, via \eqref{eq:logoderecurs1} and \eqref{eq:logoderecurs2}. Second, $f_{\theta}$ is ensured to be a $\mathrm{Lip}(\gamma)$ function for $\gamma\in(N-1,N]$. Figure \ref{fig:Log-NCDE} is a schematic diagram of a Log-NCDE.

\begin{figure*}
    \centering
    \includegraphics[width=0.68\linewidth]{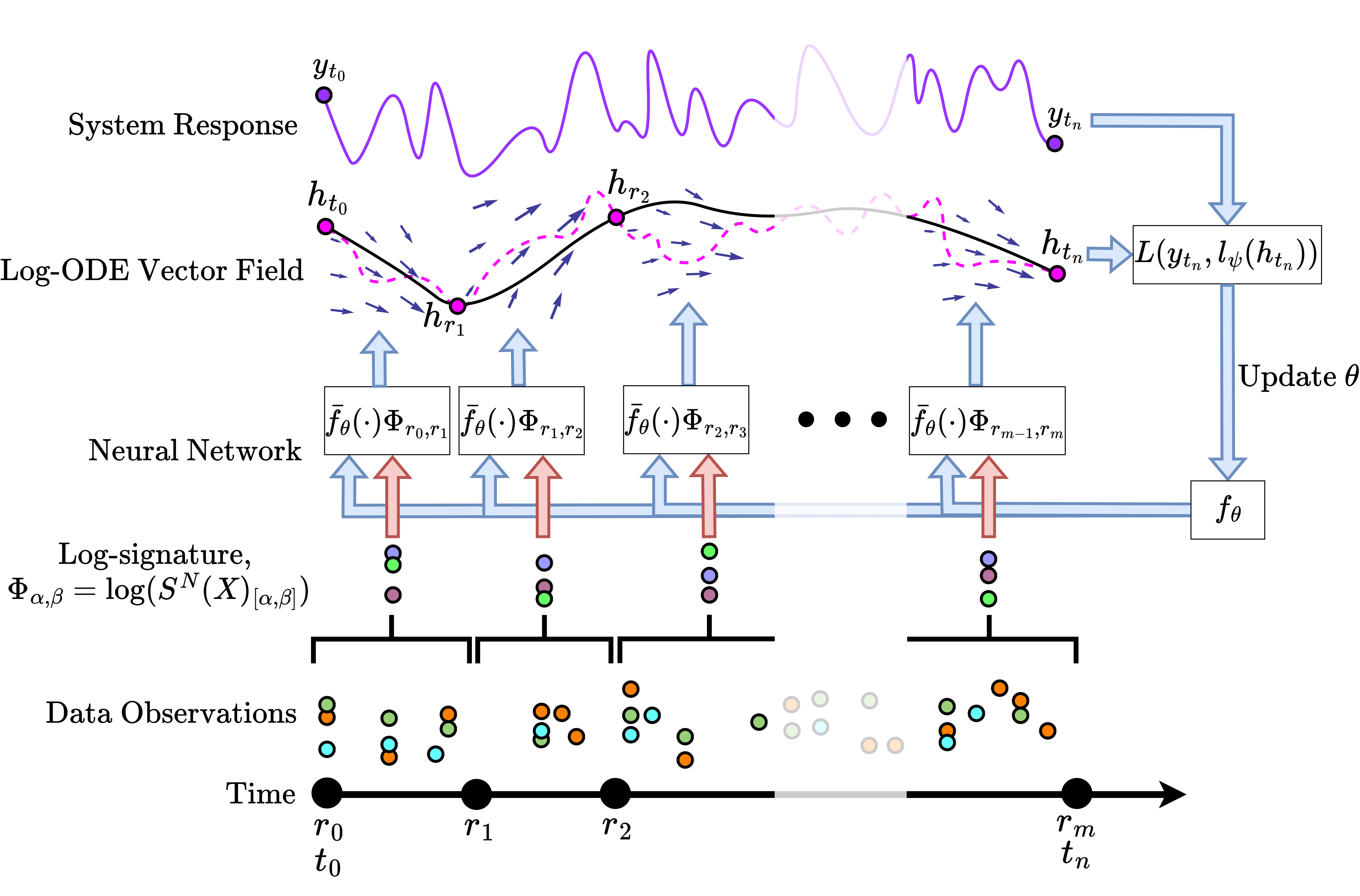}
    \caption{A schematic diagram of the training loop for a Log-NCDE. The coloured circles labelled data observations represent irregular samples from a time series. The purple line labelled system response is a potentially time varying label one wants to predict. The log-signatures of the data observations over each interval $[r_i,r_{i+1}]$ are combined with the iterated Lie brackets of $f_{\theta}$ to produce the vector field $g_{\theta, X}$ from \eqref{eq:Log-NCDE}. The pink dashed line represents the solution of \eqref{eq:ncde} and the solid black line represents the approximation obtained by solving \eqref{eq:Log-NCDE}. A linear map $l_{\psi}$ gives the Log-NCDE's prediction and a loss function $L(\cdot, \cdot)$ is used to update $f_{\theta}$'s parameters.}
    \label{fig:Log-NCDE}
\end{figure*}

\begin{figure}
    \centering
    \includegraphics[width=0.9\linewidth]{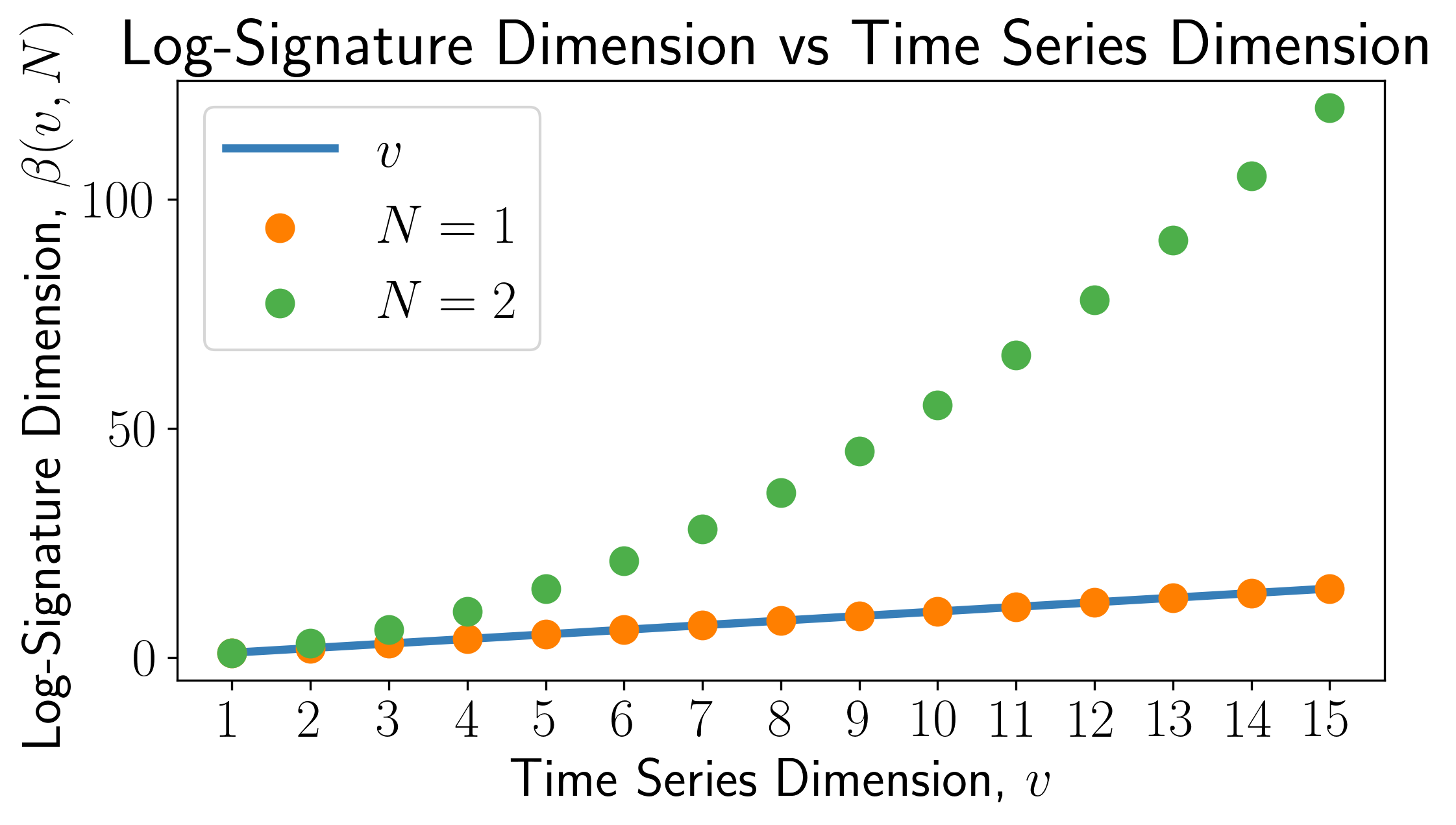}
    \caption{A plot of $\beta(v, N)$ against $v$ for $N=1,2$. The output dimension of a NRDE's neural network is $\mathbb{R}^{u\times \beta(v,N)}$, whereas for a Log-NCDE it is $\mathbb{R}^{u \times v}$.}
    \label{fig:logsig_dim}
\end{figure}

These changes have a major benefit. For $N>1$, Log-NCDEs are exploring a drastically smaller output space during training than NRDEs, while maintaining the same expressivity, as NCDEs are universal approximators. This is because the output dimension of $f_{\theta}$ is $u \times v$, whereas the output dimension of $\bar{f}_{\theta}$ is $u\times \beta(v,N)$. Figure \ref{fig:logsig_dim} compares these values for paths of dimension $v$ from $1$ to $15$ and truncation depths of $N=1$ and $N=2$. This benefit comes at the cost of needing to calculate the iterated Lie brackets when evaluating Log-NCDEs, which will be quantified in Section \ref{sec:cost} and explored empirically in Section \ref{sec:uea}. 

When $N=1$, \eqref{eq:Log-NCDE} simplifies to 
\begin{equation}
\label{eq:Log-NCDE_N1}
    g_{\theta, X}(h_s) = f_{\theta}\left(h_s\right)\frac{X_{r_{i+1}}-X_{r_i}}{r_{i+1}-r_i}.
\end{equation}
Hence, in this case the only difference between Log-NCDEs and NRDEs is the regularisation of $f_{\theta}$. Furthermore, \eqref{eq:Log-NCDE_N1} and \eqref{eq:cde_diff} are equivalent when $X$ is a linear interpolation. Therefore, the approach of NCDEs, NRDEs, and Log-NCDEs coincide when using a depth$-1$ Log-ODE approximation \citep{morrill2021neuralrough}.

\subsection{$\mathrm{Lip}(\gamma)$ Neural Networks}
\label{sec:lipgammaNN}

The composition of two $\mathrm{Lip}(\gamma)$ functions is $\mathrm{Lip}(\gamma)$ \citep[Lemma 2.2]{composition}. Hence, a simple approach to ensuring a fully connected neural network (FCNN) is $\mathrm{Lip}(\gamma)$ is to make each layer $\mathrm{Lip}(\gamma)$. This can be achieved by choosing an infinitely differentiable activation function, such as $\text{SiLU}$ \citep{ELFWING20183}. However, in practice this may not ensure sufficient regularity, as demonstrated by Theorem \ref{thm:lipnn}.

\begin{theorem}
\label{thm:lipnn}
    Let $f_{\theta}$ be a FCNN with input dimension $n_{in}$, hidden dimension $n_h$, depth $m$, and activation function $\text{SiLU}$. Assuming the input $\mathbf{x}=[x_1,\ldots,x_{n_{in}}]^T$ satisfies $|x_j|\leq1$ for $j=1,\ldots,n_{in}$, then $f_{\theta}\in\mathrm{Lip}(2)$ and 
    \begin{equation}
    \label{eq:lipnn}
    ||f_{\theta}||_{\text{Lip}(2)} \leq CP_{m!}\left(\{||W^i||_2, ||\mathbf{b}^i||_2\}_{i=1}^m\right)
    \end{equation}
    where $C$ is a constant depending on $n_{in}, n_h,$ and $m$, $\{W^i\}_{i=1}^m$ and $\{\mathbf{b}^i\}_{i=1}^m$ are the weights and biases of $i^{\text{th}}$ layer of $f_{\theta}$, and $P_{m!}$ is a polynomial of order $m!$.
\end{theorem}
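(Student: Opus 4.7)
The plan is to reduce the claim $f_{\theta}\in\mathrm{Lip}(2)$ to verifying the three quantitative estimates implied by Definition \ref{def:lipgamma} with $\gamma=2$ (so $k=1$): a uniform bound on $f_{\theta}$, a uniform bound on its Jacobian $Df_{\theta}$, and a Lipschitz bound on $Df_{\theta}$. I would treat the network layer-by-layer, obtain explicit $\mathrm{Lip}(2)$ estimates for the two elementary components (affine maps and the SiLU activation), and then iterate the composition rule from \citep[Lemma 2.2]{composition} to assemble a polynomial bound in the layerwise norms $\{||W^i||_2, ||\mathbf{b}^i||_2\}_{i=1}^m$.

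For the building blocks, the affine map $z\mapsto W^iz+\mathbf{b}^i$ has constant Jacobian $W^i$ and vanishing second-order term, so on any centred ball $\{|z|\leq R\}$ it is $\mathrm{Lip}(2)$ with norm bounded by $||W^i||_2\,R+||\mathbf{b}^i||_2$, and it sends that ball into the ball of radius $||W^i||_2\,R+||\mathbf{b}^i||_2$. The activation $\sigma(x)=x/(1+e^{-x})$ satisfies $|\sigma(x)|\leq|x|$ (since $\sigma(x)=x\cdot\mathrm{sigmoid}(x)$), with $\sigma'$ and $\sigma''$ both globally bounded by absolute constants; applied coordinate-wise on $\{|z|\leq R\}$ it is therefore $\mathrm{Lip}(2)$ with norm bounded by an absolute multiple of $1+R$. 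The input assumption $|x_j|\leq1$ initialises the recursion at $R_0=\sqrt{n_{in}}$, and unrolling $R_i\leq||W^i||_2\,R_{i-1}+||\mathbf{b}^i||_2$ gives a polynomial in the layerwise norms which controls the domain on which subsequent layers must be $\mathrm{Lip}(2)$.

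Iterating the composition inequality, each application replaces the $\mathrm{Lip}(2)$-norm of the partial composition with a polynomial expression in itself and the next layer's norm; expanding the resulting second-derivative-like term via a Fa\`a di Bruno type formula produces a sum indexed by set partitions of $\{1,\ldots,m\}$, whose cardinality is crudely bounded by $m!$. Each monomial in this sum is a product of $||W^i||_2$ and $||\mathbf{b}^i||_2$, and the dimension- and depth-dependent prefactors from the norm conversions on the hidden layers together with the absolute SiLU constants collapse into a single $C=C(n_{in},n_h,m)$, giving \eqref{eq:lipnn}. The main obstacle is the simultaneous bookkeeping of the enclosing radii $R_i$ (which grow polynomially per layer because SiLU is only asymptotically, not uniformly, bounded) together with the compounding $\mathrm{Lip}(2)$-norms through composition, and ensuring that the cross terms arising from the chain-rule expansion are captured by a polynomial in the weights and biases rather than degenerating into an exponential blow-up in the depth $m$.
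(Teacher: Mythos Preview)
Your proposal follows essentially the same strategy as the paper's proof: establish $\mathrm{Lip}(2)$ bounds for the elementary building blocks, track the growth of the enclosing radii through the layers (the paper's quantity $\Gamma^i$ in its Lemma~\ref{lem:lip2layer} is exactly your $R_{i-1}$, obeying the same recursion $R_i\leq \|W^i\|_2 R_{i-1}+\|\mathbf{b}^i\|_2$), and then iterate the composition estimate of \citep[Lemma~2.2]{composition}. The only cosmetic difference is that the paper packages each affine-plus-SiLU layer into a single unit with an explicit $\mathrm{Lip}(2)$ bound (its Lemma~\ref{lem:lip2layer}) rather than treating the affine map and activation as separate factors; either decomposition works.

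The one place where your reasoning goes astray is the justification of the degree $m!$ via Fa\`a di Bruno. That formula, applied to a \emph{second} derivative, sums over partitions of $\{1,2\}$, not of $\{1,\ldots,m\}$; the second derivative of an $m$-fold composition has only $m$ summands (one for each layer at which the single $D^2$ lands), so the partition count does not produce the factorial. The paper does not invoke Fa\`a di Bruno at all. It first shows that $\|L^j\|_{\mathrm{Lip}(2)}$ is itself a degree-$j$ polynomial in the weights and biases (the degree comes from the $\Gamma^j\|W^j_k\|_2$ term, since $\Gamma^j$ already involves a product of $j-1$ weight norms), and then simply iterates its explicit composition bound $\|g\circ f\|_{\mathrm{Lip}(2)}\leq 5\,\|g\|_{\mathrm{Lip}(2)}\max\{\|f\|_{\mathrm{Lip}(2)}^{2},1\}$, with the repeated squaring of the inner factor generating the stated polynomial degree. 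You should replace the Fa\`a di Bruno step with this direct iteration.
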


\begin{proof}
    A proof is given in Appendix \ref{app:lipnn_proof}.
\end{proof}

Assuming that each layer $\{L^i\}_{i=1}^m$ of $f_{\theta}$ satisfies $||L^i||_{\text{Lip}(2)}=1$, an explicit evaluation of \eqref{eq:lipnn} gives
\begin{equation}
    ||f_{\theta}||_{\text{Lip}(2)} \leq 5^{2^{m-1}-1}.
\end{equation}
For a depth $7$ FCNN, this bound is greater than the maximum value of a single precision floating point number. Hence, it may be necessary to control $||f_{\theta}||_{\text{Lip}(2)}$ explicitly during training. This is achieved by modifying the neural network's loss function $L$ to
\begin{equation}
\label{eq:weightpenalty}
    L \mapsto L + \lambda\left(\sum_{i=1}^m||W^i||_2 + ||\mathbf{b}^i||_2\right),
\end{equation}
where $\lambda$ is a hyperparameter controlling the weight of the penalty. This is an example of weight regularisation, which has long been understood to improve generalisation in NNs \citep{weightreg, weightdecay}. Equation \ref{eq:weightpenalty} is specifically a variation of spectral norm regularisation \citep{Yoshida2017SpectralNR}. 


\subsection{Constructing the Log-ODE Vector Field}
\label{sec:liebracketNN}

The linear map $\bar{f}_{\theta}$ in \eqref{eq:Log-NCDE} is defined recursively by \eqref{eq:logoderecurs1} and \eqref{eq:logoderecurs2}. Assuming $f_{\theta}(\cdot)a$ is infinitely differentiable, then $f_{\theta}(\cdot)a$ is an element of the Lie algebra $C^{\infty}(\mathbb{R}^u, \mathbb{R}^u)$ and 
\begin{equation}
    [f_{\theta}(\cdot)a,f_{\theta}(\cdot)b] = J_{f_{\theta}(\cdot)a}f_{\theta}(\cdot)b - J_{f_{\theta}(\cdot)b}f_{\theta}(\cdot)a,
\end{equation}
as discussed in Section \ref{sec:liebracket}. Let $\{e_j\}_{j=1}^{v}$ be the usual basis of $\mathbb{R}^{v}$. A choice of basis for $\mathfrak{L}^N(\mathbb{R}^{v})$ is a Hall basis, denoted $\{\hat{e}_k\}_{k=1}^{\beta(v, N)}$, which is a specific subset of up to the $(N-1)^{\text{th}}$ iterated Lie brackets of $\{e_j\}_{j=1}^{v}$ \citep{Hall1950ABF}. Rewriting \eqref{eq:Log-NCDE} using a Hall basis,
\begin{equation}
\label{eq:hall}
    \bar{f}_{\theta}\left(h_s\right)\frac{\log(S^{N}(X)_{[r_i,r_{i+1}]})}{r_{i+1}-r_i} =  \sum_{k=1}^{\beta(v, N)}\lambda_k \bar{f}_{\theta}(h_s)\hat{e}_k,
\end{equation}
where $\lambda_k$ is the term in the scaled log-signature corresponding to the basis element $\hat{e}_k$. Since each $\hat{e}_k$ can be written as iterated Lie brackets of $\{e_j\}_{j=1}^{v}$, it is possible to replace $\bar{f}_{\theta}(\cdot)\hat{e}_k$ with the iterated Lie brackets of $f_{\theta}(\cdot)e_i$ using \eqref{eq:logoderecurs1} and \eqref{eq:logoderecurs2}. Each $f_{\theta}(\cdot)e_i:\mathbb{R}^u \rightarrow \mathbb{R}^u$ is a vector field defined by the $i^{\text{th}}$ column of the neural network's output. Hence, $g_{\theta, X}$ can be evaluated at a point using iterated Jacobian-vector products (JVPs) of $f_{\theta}$.

\subsection{Computational Cost}
\label{sec:cost}

When the signature truncation depth $N$ is greater than $1$, NRDEs and Log-NCDEs incur an additional computational cost for each evaluation of the vector field, which we quantify here for $N=2$. Assume that a NCDE, NRDE, and Log-NCDE are all using an identical FCNN as their vector field, except for the dimension of the final layer in the NRDE. Let $m$ and $n_h$ be the depth and dimension of the FCNN's hidden layers respectively, and $u$ and $v$ be the dimension of $h_t$ and $X$ from \eqref{eq:ncde} respectively. Letting $F_{\text{x}}$ be the number of FLOPs to evaluate model x's vector field,
\begin{equation}
    \begin{aligned}
        F_{\text{NCDE}}&= 2un_h + 2(m-1)n_h^2 + 2uvn_h,\\
        F_{\text{NRDE}}&= 2un_h + 2(m-1)n_h^2 + u(v^2-v)n_h,\\
        F_{\text{Log-NCDE}}&= 3vF_{\text{NCDE}},
    \end{aligned}
\end{equation}
where the number of FLOPs to calculate a JVP is $3$ times that of evaluating the FCNN and $v$ JVPs of $f_{\theta}$ are needed to evaluate \eqref{eq:hall} when $N=2$ \citep[Chapter~4]{Griewank2000}. Log-NCDEs and NRDEs have the same asymptotic computational complexity in each variable. However, each JVP is evaluated at the same point $h_s$. This allows the computational burden of Log-NCDEs on high-dimensional time series to be reduced by constructing a batched function using Jax's \texttt{vmap} \citep{jax2018github}. The computational cost is evaluated empirically in Section \ref{sec:uea}.

\subsection{Limitations}
\label{sec:lim}
In this paper, we only consider Log-NCDEs which use a depth$-1$ or depth$-2$ Log-ODE approximation. This is due to the following two limitations. First, there are no theoretical results explicitly bounding the $\mathrm{Lip}(\gamma)$ norm of a neural network for $\gamma>2$. Second, the computational cost required to evaluate $g_{\theta, X}$ grows rapidly with the depth $N$. This can make $N>2$ computationally infeasible, especially for high-dimensional time series. Another general limitation of NCDEs is the need to solve the differential equation recursively, preventing parallelisation. This is in contrast to non-selective structured state-space models, whose underlying model is a differential equation that can be solved parallel in time \citep{S4}.


\section{Experiments}

\subsection{Baseline Methods}

Log-NCDEs are compared against six models, which represent the state-of-the-art for a range of deep learning approaches to time series modelling. Four of these models are stacked recurrent models, which use the same general architecture, but with different recurrent layers. The architecture used in this paper is based on the one introduced by \citet{S5} and the four different recurrent layers considered are LRU, S5, MAMBA, and S6, the selective state-space recurrence introduced as a component of MAMBA \citep{orvieto2023resurrecting, S5, gu2023mamba}.
The other two baseline models are continuous models; a NCDE using a Hermite cubic spline with backward differences as the interpolation and a NRDE \citep{kidger2020neuralcde, morrill2021neuralrough}. Further details on all model architectures can be found in Appendices \ref{app:baseline_models} and \ref{app:cde}.

\subsection{Toy Dataset}

We construct a toy dataset of $100{,}000$ time series with $6$ channels and $100$ regularly spaced samples each. For every time step, the change in each channel is sampled independently from the discrete probability distribution with density
\begin{equation}
    p(n) = \int_{n-0.5}^{n+0.5}\frac{1}{\sqrt{2\pi}}e^{-\frac{1}{2}x^2}\text{d}x,
\end{equation}
where $n\in\mathbb{Z}$. In other words, the change in a channel at each time step is a sample from a standard normal distribution rounded to the nearest integer. Figure \ref{fig:toy_example} is a plot of a sample path from the toy dataset. 

\begin{figure}
\centering
\includegraphics[width=0.8\linewidth]{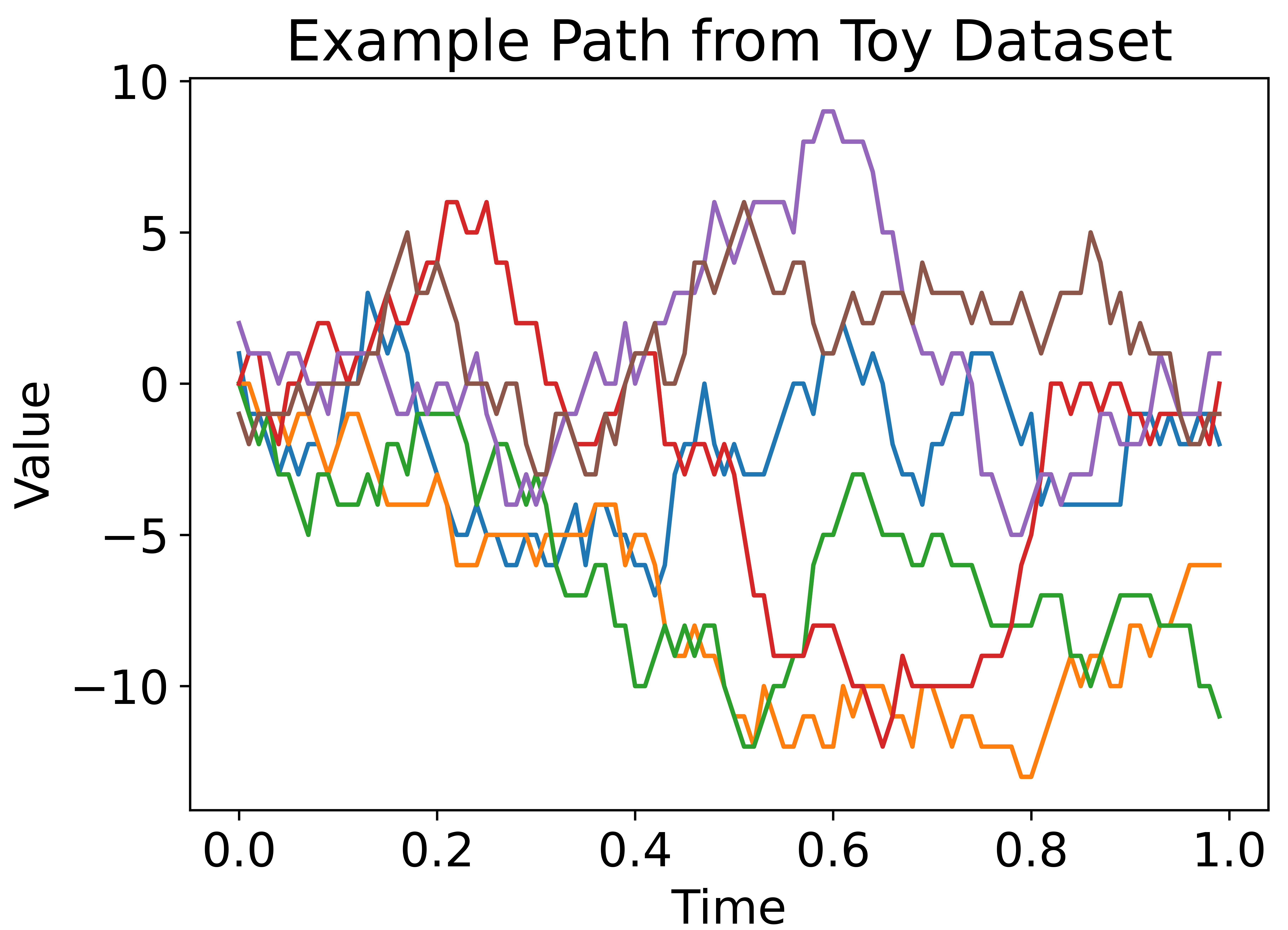}
\caption{An example path from the toy dataset, where each colour represents a channel in the path.}
\vspace{-0.25cm}
\label{fig:toy_example}
\end{figure}

We consider four different binary classifications on the toy dataset. Each classification is a specific term in the signature of the path which depends on a different number of channels.
\begin{enumerate}
    \item Was the change in the third channel, $\int_0^1\text{d}X^3_s$, greater than zero? 
    \item Was the area integral of the third and sixth channels, $\int_0^1\int_0^u\text{d}X^3_s\text{d}X^6_u$, greater than zero?
    \item Was the volume integral of the third, sixth, and first channels, $\int_0^1\int_0^v\int_0^u\text{d}X^3_s\text{d}X^6_u\text{d}X^1_v$, greater than zero?
    \item Was the $4$D volume integral of the third, sixth, first, and fourth channels, $\int_0^1\int_0^w\int_0^v\int_0^u\text{d}X^3_s\text{d}X^6_u\text{d}X^1_v\text{d}X^4_w$, greater than zero?
\end{enumerate} 

On this dataset, all models use a hidden state of dimension $64$ and Adam with a learning rate of $0.0003$ \citep{kingma2017adam}. LRU, S5, S6, and MAMBA use six blocks. NRDE and Log-NCDE take $r_{i+1}-r_i$ to be $4$ observations and the signature truncation depth $N$ to be $2$. Full hyperparameter choices are given in Appendix \ref{app:toy_details}.

\subsection{UEA Multivariate Time Series Classification Archive}

The models considered in this paper are evaluated on six datasets from the UEA multivariate time series classification archive (UEA-MTSCA)\footnote{As of June $1^{\text{st}}$ 2024, the EigenWorms dataset at \url{https://timeseriesclassification.com} has $23$ duplicated time series, which were removed for the experiments in this paper.}. These six datasets were chosen via the following two criteria. First, only datasets with more than $200$ total cases were considered. Second, the six datasets with the most observations were chosen, as datasets with many observations have previously proved challenging for deep learning approaches to time series modelling. Further details on the chosen datasets can be found in Appendix \ref{app:uea_ppg_details}. Following \citet{morrill2021neuralrough}, the original train and test cases are combined and resplit into new random train, validation, and test cases using a $70:15:15$ split. 

Hyperparameters for all models are found using a grid search over the validation accuracy on a fixed random split of the data. Full details on the hyperparameter grid search are in Appendix \ref{app:uea_ppg_details}. Having fixed their hyperparameters, models are compared on their average test set accuracy over five different random splits of the data. In order to compare models on their average GPU memory and run time, $1000$ steps of training are run on an NVIDIA RTX 4090. The average run time is estimated by combining the time for $1000$ training steps with the average total number of training steps from the five runs over the random data splits. 

\subsection{PPG-DaLiA}

PPG-DaLiA is a multivariate time series regression dataset, where the aim is to predict a person's heart rate using data collected from a wrist-worn device \citep{Reiss2019DeepPPG}. The dataset consists of fifteen individuals with around $150$ minutes of recording each at a maximum sampling rate of $128 \,\mathrm{Hz}$. There are six channels; blood volume pulse, electrodermal activity, body temperature, and three-axis acceleration. The data is split into a train, validation, and test set following a $70:15:15$ split for each individual. After splitting the data, a sliding window of length $49920$ and step size $4992$ is applied. 

Hyperparameters are found using the same method as for the UEA-MTSCA, but with validation mean squared error and slightly different hyperparameter choices given the high number of observations. Full details can be found in Appendix \ref{app:uea_ppg_details}. Having fixed their hyperparameters, models are compared on their average mean squared error over five different runs on the same fixed data split.

\section{Results}
\label{sec:results}

\subsection{Toy Dataset}

Figure \ref{fig:toy_results} compares the performance of the models considered in this paper on the four different toy dataset classifications. As expected, given that the classifications considered are solutions to CDEs, NCDE's are the best performing model. Since NRDEs and Log-NCDEs are fixed to $r_{i+1}-r_i$ being $4$ observations and $N=2$, they are both approximations of a CDE. Notably, Log-NCDEs consistently outperform NRDEs, providing empirical evidence that NRDEs do not always accurately learn the Lie bracket structure of $\bar{f}_{\theta}$. All of the stacked recurrent models perform well when the label depends on one or two channels. However, their performance begins to decrease for three channels, and only MAMBA performs well when the label depends on four channels.

\begin{figure*}
\centering
    \includegraphics[width=\linewidth]{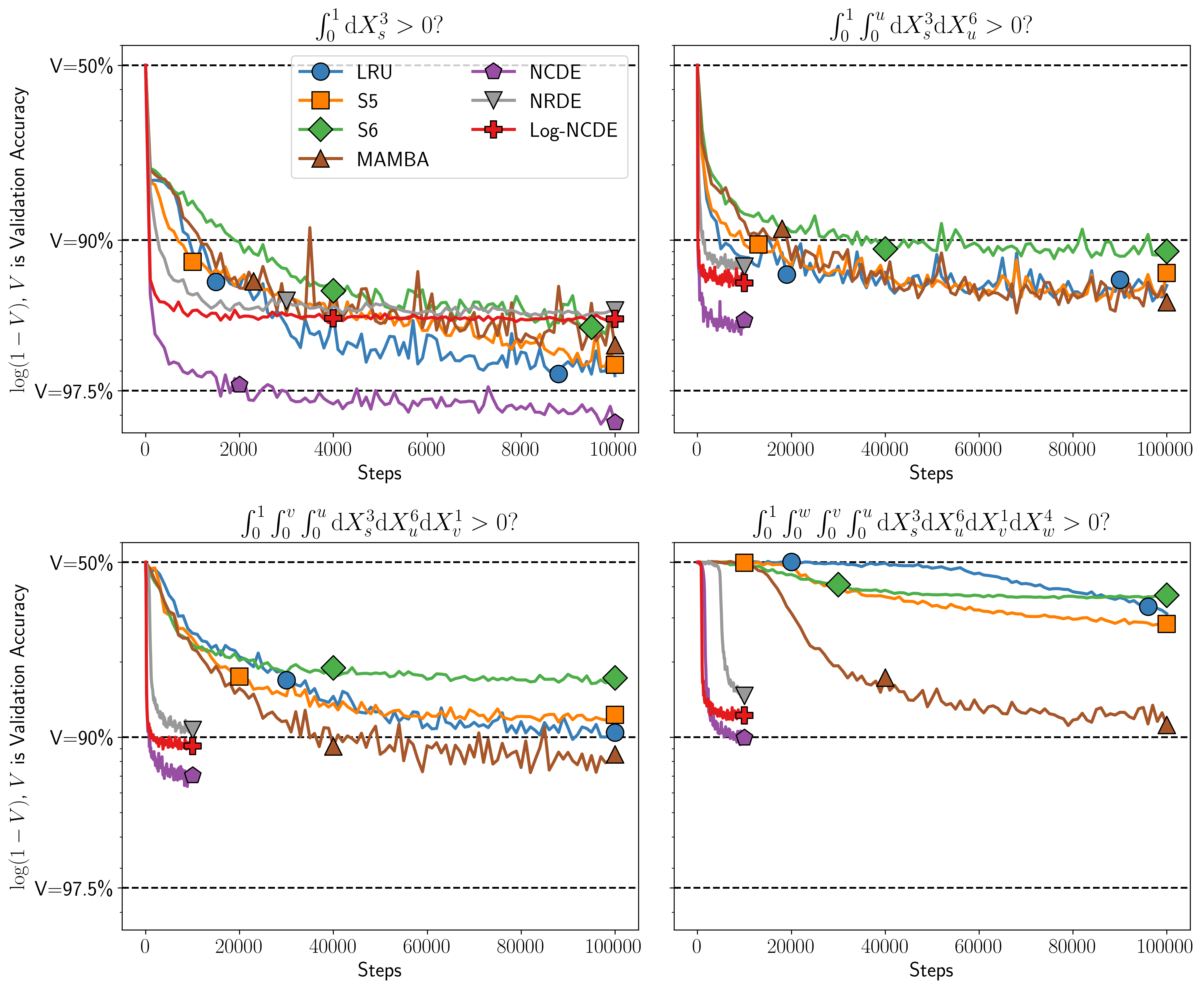}
    \caption{Validation accuracy against number of steps for LRU, S5, S6, MAMBA, NCDE, NRDE, and Log-NCDE on the four different classifications considered for the toy dataset.}
    \label{fig:toy_results}
\end{figure*}
\begingroup
\renewcommand{\arraystretch}{1.2}
\begin{table*}
\small
\caption{Test set accuracy on a subset of the UEA-MTSCA. The best performing model is highlighted in bold and the second best is underlined. The average accuracy and average rank are also reported.}
\label{tab:UEA_results_hypopt}
\centering
\begin{tabular}{l|c|c|c|c|c|c|c}
\hline
\multirow{2}{*}{Dataset} & \multicolumn{7}{c}{Method} \\ \cline{2-8}
& LRU & S5 & S6 & MAMBA & NCDE & NRDE & Log-NCDE \\ \hline
EigenWorms & $\mathbf{87.8 \pm 2.8}$ & $81.1 \pm 3.7$ & $85.0 \pm 16.1$ & $70.9 \pm 15.8$ & $75.0 \pm 3.9$ & $83.9 \pm 7.3$ & $\underline{85.6 \pm 5.1}$ \\
EthanolConcentration & $21.5 \pm 2.1$ & $24.1 \pm 4.3$ & $26.4 \pm 6.4$ & $27.9 \pm 4.5$ & $\underline{29.9 \pm 6.5}$ & $25.3 \pm 1.8$ & $\mathbf{34.4 \pm 6.4}$ \\
Heartbeat & $\mathbf{78.4 \pm 6.7}$ & $\underline{77.7 \pm 5.5}$ & $76.5 \pm 8.3$ & $76.2 \pm 3.8$ & $73.9 \pm 2.6$ & $72.9 \pm 4.8$ & $75.2 \pm 4.6$ \\
MotorImagery & $48.4 \pm 5.0$ & $47.7 \pm 5.5$ & $\underline{51.3 \pm 4.7}$ & $47.7 \pm 4.5$ & $49.5 \pm 2.8$ & $47.0 \pm 5.7$ & $\mathbf{53.7 \pm 5.3}$ \\
SelfRegulationSCP1 & $82.6 \pm 3.4$ & $\mathbf{89.9 \pm 4.6}$ & $82.8 \pm 2.7$ & $80.7 \pm 1.4$ & $79.8 \pm 5.6$ & $80.9 \pm 2.5$ & $\underline{83.1 \pm 2.8}$ \\
SelfRegulationSCP2 & $51.2 \pm 3.6$ & $50.5 \pm 2.6$ & $49.9 \pm 9.5$ & $48.2 \pm 3.9$ & $53.0 \pm 2.8$ & $\underline{\mathbf{53.7 \pm 6.9}}$ & $\underline{\mathbf{53.7 \pm 4.1}}$ \\ \hline
Av. & $61.7$ & $61.8$ & $\underline{62.0}$ & $58.6$ & $60.2$ & $60.6$ & $\mathbf{64.3}$ \\
Av. Rank & $\underline{3.5}$ & $4.0$ & $\underline{3.5}$ & $5.5$ & $4.5$ & $4.9$ & $\mathbf{2.1}$ \\
\end{tabular}
\end{table*}
\endgroup
\begingroup
\renewcommand{\arraystretch}{1.1}
\begin{table}
\small
\caption{Average GPU memory and run time for each model over the six datasets from the UEA-MTSCA experiments.}
\label{tab:UEA_results_mem}
\centering
\begin{tabular}{l|c|c}
Model & Av. GPU Mem. (MB) & Av. run time (s) \\ \hline
LRU &  4121.67 & 466.09 \\
S5 &  2815.00 & 244.78 \\
S6 &  2608.00 & 578.15 \\
MAMBA &  4450.33 & 1553.83 \\
NCDE &  1759.67 & 6649.91 \\
NRDE &  2676.33 & 7284.20 \\
Log-NCDE &  1999.67 & 2128.32 \\
\end{tabular}
\end{table}
\endgroup

\subsection{UEA-MTSCA}
\label{sec:uea}


Table \ref{tab:UEA_results_hypopt} reports the average and standard deviation of each model's test set accuracy over five data splits. MAMBA achieves the lowest average accuracy. NCDEs and NRDEs have similar accuracies except on EigenWorms, the dataset with the most observations, where NRDEs significantly improve over NCDEs. However, NRDEs are still outperformed on average accuracy by three stacked recurrent models: LRU, S5, and S6. Log-NCDEs are the best performing model on average accuracy and rank. Compared to NRDEs, they achieve an equal or higher average accuracy on all six datasets and a lower standard deviation on four datasets. These results highlight the improvement in performance which can be achieved by calculating the Lie brackets.

Table \ref{tab:UEA_results_mem} details the average GPU memory and run time for each model, with the results for individual datasets given in Appendix \ref{app:mem_time}. The major contributors to NCDEs high average run time are time series with many observations. Using a depth$-2$ Log-ODE method decreases the computational burden on datasets with many observations for both NRDEs and Log-NCDEs. Although NRDEs and Log-NCDEs have the same asymptotic computational complexity, using a batched function to calculate the Lie brackets leads to Log-NCDEs having a lower computational burden on high-dimensional time series than NRDEs, as discussed in Section \ref{sec:cost}. Hence, Log-NCDEs have a lower average run time than both NCDEs and NRDEs. Despite these improvements, all four stacked recurrent models have lower average run times than Log-NCDEs. 

\subsection{PPG-DaLiA}

Table \ref{tab:PPG_results_hypopt} contains the average and standard deviation of each model's test set mean squared error on the PPG-DaLiA dataset. In contrast to the UEA-MTSCA experiments, MAMBA is the best performing stacked recurrent model. However, Log-NCDEs still achieve the best performance, obtaining the lowest average test set mean squared error and the second lowest standard deviation. 

\begingroup
\renewcommand{\arraystretch}{1.1}
\begin{table}
\vspace{-0.4cm}
\small
\caption{Average test set mean squared error on the PPG-DaLiA dataset.}
\label{tab:PPG_results_hypopt}
\centering
\begin{tabular}{c|c}
Model & MSE $(\times 10^{-2})$ \\ \hline
LRU & $12.17 \pm 0.49$ \\
S5 & $12.63 \pm 1.25$ \\
S6 & $12.88 \pm 2.05$ \\
MAMBA & $10.65 \pm 2.20$ \\
NCDE & $13.54 \pm 0.69$ \\
NRDE & $\underline{9.90\pm 0.97}$ \\
Log-NCDE & $\mathbf{9.56 \pm 0.59}$ 
\end{tabular}
\end{table}
\endgroup

\section{Discussion}

Recent theoretical work on the expressive power of structured state space models may provide an explanation for their performance on the toy dataset. It has been shown that the recurrent layer of non-selective state space models, such as S5, are unable to capture terms in the signature that depend on more than one channel. Instead, the computational burden is placed on the non-linear mixing in-between recurrent blocks. Furthermore, it has been shown that stacked selective state-space models, such as MAMBA, can capture higher order terms in the signature with only linear mixing layers \citep{cirone2024theoretical}. Although the toy dataset highlights a potential limitation of the stacked recurrent models, this did not translate into poor performance on the real-world datasets considered in this paper.

Log-NCDEs achieve the highest average accuracy on the UEA-MTSCA datasets and the lowest mean squared error on the PPG-DaLiA dataset. In particular, they improve upon NRDEs on every dataset. These results highlight the effectiveness of the Log-ODE method for improving the performance of NCDEs and the importance of calculating the Lie brackets when applying the Log-ODE method. Furthermore, despite increasing the computational cost of each vector field evaluation, Log-NCDEs have a lower average run time than both NCDEs and NRDEs on the UEA-MTSCA datasets. Empirical evidence suggests this is due to  the Log-ODE method improving efficiency on time series with many observations, and calculating the Lie brackets lowering the computational burden of the Log-ODE method on high-dimensional time series. In addition to achieving state-of-the-art performance on the regularly sampled datasets considered in this paper, Log-NCDEs maintain the ability of NCDEs to naturally handle irregular sampling, making them an attractive option for real-world applications. 

\section{Conclusion}

Building on NRDEs, this paper introduced Log-NCDEs, which utilise the Log-ODE method to train NCDEs in an effective and efficient manner. This required proving a novel theoretical result bounding the $\mathrm{Lip}(\gamma)$ norm of fully connected neural networks for $1<\gamma\leq 2$, as well as developing an efficient method for calculating the iterated Lie brackets of a neural network. A thorough empirical evaluation demonstrated the benefits of calculating the Lie brackets when applying the Log-ODE method. Furthermore, it showed that Log-NCDEs can achieve state-of-the-art performance on a range of multivariate time series datasets.

A reasonable direction of future work is extending Log-NCDE's to depth$-N$ Log-ODE methods for $N>2$. This would require proving an equivalent result to Theorem \ref{thm:lipnn} for $\gamma>2$. Furthermore, it would be necessary to address the computational cost of the iterated Lie brackets. This could be achieved by using a structured neural network with cheap Jacobian-vector products as the CDE vector field. Another avenue of future work could be incorporating the recently developed adaptive version of the Log-ODE method \citep{bayer2023adaptive}. 

\section*{Impact Statement}

This paper presents Log Neural Controlled Differential Equations, a novel approach aimed at advancing the field of time series modelling. Potential applications of the method include healthcare, finance, and biology, where accurate time series modeling plays a crucial role. Despite the clear potential for positive impact, care must be taken to further understand the capabilities and limitations of the model before real-world deployment. Additionally, structured state-space models, an alternative approach to time series modelling, have recently been integrated into large language models (LLMs). The advancement of LLMs has many potential societal consequences, both positive and negative.

\section*{Acknowledgements}
Benjamin Walker was funded by the Hong Kong Innovation and Technology Commission (InnoHK Project CIMDA).
Andrew McLeod was funded in part by the EPSRC [grant number EP/S026347/1] and in part by The Alan Turing Institute under the EPSRC grant EP/N510129/1.
Terry Lyons was funded in part by the EPSRC [grant number EP/S026347/1], in part by The Alan Turing Institute under the EPSRC grant EP/N510129/1, the Data Centric Engineering Programme (under the Lloyd’s Register Foundation grant G0095), the Defence and Security Programme (funded by the UK Government) and the Office for National Statistics \& The Alan Turing Institute (strategic partnership) and in part by the Hong Kong Innovation and Technology Commission (InnoHK Project CIMDA).
The authors would like to acknowledge the use of the University of Oxford Advanced Research Computing (ARC) facility in carrying out this work. http://dx.doi.org/10.5281/zenodo.22558

\bibliography{Log-NCDE-ICML}
\bibliographystyle{icml2024}

\newpage
\appendix
\onecolumn
\section{Additional Mathematical Details}

\subsection{Existence and Uniqueness}
\label{sec:app_ex_uniq}

Let $V$ and $W$ be Banach spaces, $X:[0,T]\rightarrow V$ and $y:[0,T]\rightarrow W$ be continuous paths, and $f(\cdot)\nu$ be a linear map from $\nu\in V$ to vector fields on $W$. Assume that $X$, $Y$, and $f$ are regular enough for the integral 
\begin{equation}
    \int_{0}^tf(Y_s)\text{d}X_s
\end{equation}
to be defined for all $t\in [0,T]$ in the Young sense \cite{Young1936AnIO}. The path $Y$ is said to obey a controlled differential equation (CDE) if 
\begin{equation}
\label{eq:cde}
    Y_t = Y_{0} + \int_{0}^tf(Y_s)\text{d}X_s,
\end{equation}
for $t\in [0,T]$, where $Y_{0}\in W$ is the initial condition and $X$ is the control \cite{lyons2007differential}. The existence and uniqueness of the solution to a CDE depends on the smoothness of the control path $X$ and the vector field $f$. We will measure the smoothness of a path by the smallest $p\geq1$ for which the $p-$variation is finite and the smoothness of a vector field by the largest $\gamma>0$ such that the function is $\text{Lip}(\gamma)$ (defined in Section \ref{sec:lipgamma}).
\begin{definition} \emph{(Partition)}
    A partition of a real interval $[0,T]$ is a set of real numbers $\{r_i\}_{i=0}^m$ satisfying $0=r_0<\ldots<r_m=T$.
\end{definition}
\begin{definition} \emph{($p-$variation \cite{Young1936AnIO})}
    Let $V$ be a Banach space, $\mathcal{D}=(r_0, \cdots, r_m) \subset [0,T]$ be a partition of $[0,T]$, and $p\geq 1$ be a real number. The $p-$variation of a path $X:[0,T]\rightarrow V$ is defined as 
    \begin{equation}
        ||X||_{p} = \left[\sup_{\mathcal{D}}\sum_{r_i \in \mathcal{D}}|X_{r_i}-X_{r_{i+1}}|^p\right]^{\frac{1}{p}}.
    \end{equation} 
\end{definition}
\begin{theorem}
\label{th:existence}
    Let $1\leq p <2$ and $p-1 < \gamma \leq 1$. If $W$ is finite-dimensional, $X$ has finite $p-$variation, and $f$ is $\text{Lip}(\gamma)$, then (\ref{eq:cde}) admits a solution for every $y_{0} \in W$ \cite{Lyons1994DIFFERENTIALED}.
\end{theorem}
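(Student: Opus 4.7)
The plan is to prove existence via a Young-integral fixed-point argument, using finite-dimensionality of $W$ to invoke a compactness-based fixed point theorem (Schauder) rather than a contraction argument, since $f$ is only $\mathrm{Lip}(\gamma)$ with $\gamma \leq 1$ and hence need not give a contraction.

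First, I would verify that the integrand $f(Y)$ has enough regularity for Young integration whenever $Y$ has finite $p$-variation. If $Y$ has finite $p$-variation and $f$ is $\mathrm{Lip}(\gamma)$ with $\gamma \leq 1$, then the $\gamma$-Hölder bound on $f$ yields $|f(Y_s) - f(Y_t)| \leq \|f\|_{\mathrm{Lip}(\gamma)} |Y_s - Y_t|^\gamma$, so $f(Y)$ has finite $(p/\gamma)$-variation with $\|f(Y)\|_{p/\gamma} \leq \|f\|_{\mathrm{Lip}(\gamma)} \|Y\|_p^{\gamma}$. The hypothesis $\gamma > p-1$ gives $1/p + \gamma/p > 1$, which is exactly Young's condition, so $\int_0^t f(Y_s)\,\mathrm{d}X_s$ is well-defined and satisfies Young's inequality
\begin{equation}
\Big\| \int_0^{\cdot} f(Y_s)\,\mathrm{d}X_s \Big\|_p \leq C_{p,\gamma}\, \|f(Y)\|_{p/\gamma}\, \|X\|_p + \|f(Y_0)\|\, \|X\|_p,
\end{equation}
for a universal constant $C_{p,\gamma}$ depending only on $p$ and $\gamma$.

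Next, I would set up the fixed point problem on a short interval $[0,\tau]$, where $\tau \in (0,T]$ will be chosen small. Let $R > 0$ be a parameter and define
\begin{equation}
\mathcal{K}_\tau(R) = \{ Y:[0,\tau]\to W \text{ continuous} : Y_0 = y_0,\ \|Y\|_{p,[0,\tau]} \leq R \}.
\end{equation}
Define $\Phi(Y)_t = y_0 + \int_0^t f(Y_s)\,\mathrm{d}X_s$ for $t \in [0,\tau]$. Using the Young estimate and the boundedness of $f$ in the $\mathrm{Lip}(\gamma)$ norm, I would show that
\begin{equation}
\|\Phi(Y)\|_{p,[0,\tau]} \leq K\, \|X\|_{p,[0,\tau]}\, (1 + R^{\gamma}),
\end{equation}
for a constant $K$ independent of $\tau$. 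Since $\|X\|_{p,[0,\tau]} \to 0$ as $\tau \to 0$ (by continuity and control-function considerations), I can pick $R$ and then $\tau$ so that $\Phi$ maps $\mathcal{K}_\tau(R)$ into itself.

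The existence of a fixed point is where I would rely on the finite-dimensionality assumption. I would show that $\mathcal{K}_\tau(R)$, equipped with the uniform norm, is convex, closed, and by an interpolation argument combined with Arzelà–Ascoli (using that paths of bounded $p$-variation are uniformly continuous with a controlled modulus, plus equiboundedness in $W$, plus finite-dimensionality of $W$), relatively compact in $C([0,\tau];W)$. I would also need continuity of $\Phi$ in the uniform topology on $\mathcal{K}_\tau(R)$: because convergence in the uniform norm for a sequence of uniformly $p$-variation-bounded paths yields convergence in any $p'$-variation for $p' > p$, standard Young-integral continuity estimates give $\Phi(Y^{(n)}) \to \Phi(Y)$ uniformly. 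Schauder's fixed point theorem then produces a local solution $Y$ on $[0,\tau]$.

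The main obstacle, and the step I would spend most care on, is the compactness/continuity pairing: proving that a uniformly bounded family in $p$-variation in a finite-dimensional space is uniformly equicontinuous with modulus governed by the control function $\omega(s,t) = \|X\|_{p,[s,t]}^p$ (and similarly for any subsequential $Y$-control), and that $\Phi$ respects uniform limits despite being built from a nonlinear Young integral. Once this local-existence step is done, I would extend to all of $[0,T]$ by a standard iteration: since the length $\tau$ of each local interval only depends on $\|X\|_{p,[t_0,t_0+\tau]}$, $\|f\|_{\mathrm{Lip}(\gamma)}$, and $p, \gamma$, and since $X$ has finite $p$-variation on $[0,T]$, one can cover $[0,T]$ by finitely many such intervals and concatenate the local solutions, with each new initial condition taken as the endpoint of the previous solution, yielding a solution on all of $[0,T]$.
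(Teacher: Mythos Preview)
The paper does not actually prove this theorem; it simply attributes the result to Lyons (1994) and states that a proof can be found in \cite{lyons2007differential}. So there is no ``paper's own proof'' to compare against, and your proposal must be judged on its own merits.

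Your overall strategy---Young estimates to make sense of the integral, a short-time invariance of a ball under the solution map $\Phi$, and then Schauder's theorem via compactness coming from finite-dimensionality of $W$---is exactly the standard route for existence when $f$ is only $\mathrm{Lip}(\gamma)$ with $\gamma\le 1$ (so that no contraction is available). The verification that $\gamma>p-1$ gives the Young condition $\tfrac{1}{p}+\tfrac{\gamma}{p}>1$ is correct, as is the global-in-time patching at the end.

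There is one genuine imprecision in the compactness step. The set $\mathcal{K}_\tau(R)$ as you define it---continuous paths with $p$-variation at most $R$---is \emph{not} relatively compact in $C([0,\tau];W)$, because a uniform $p$-variation bound alone does not give equicontinuity (one can have spikes on arbitrarily small intervals while keeping $p$-variation bounded). What does work is that the \emph{image} $\Phi(\mathcal{K}_\tau(R))$ is equicontinuous, since every $\Phi(Y)$ satisfies an estimate of the form $\|\Phi(Y)\|_{p,[s,t]}\le C\,\omega_X(s,t)^{1/p}$ with $\omega_X(s,t)=\|X\|_{p,[s,t]}^p$, and $\omega_X$ is a single fixed continuous control. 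You essentially say this when you mention the modulus being ``governed by $\omega_X$'', but attribute it to $\mathcal{K}_\tau(R)$ rather than to $\Phi(\mathcal{K}_\tau(R))$. Either argue compactness of $\Phi$ directly (Schauder only needs $\Phi$ to map a closed bounded convex set continuously into a compact subset of itself), or redefine your working set to be paths whose $p$-variation on every subinterval is dominated by a fixed multiple of $\omega_X(s,t)^{1/p}$, which is then genuinely equicontinuous and still invariant under $\Phi$. With that correction the argument goes through.
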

\begin{theorem}
\label{th:uniqueness}
    Let $1\leq p <2$ and $p<\gamma$. If $X$ has finite $p-$variation and $f$ is $\text{Lip}(\gamma)$, then (\ref{eq:cde}) admits a unique solution for every $y_{0} \in W$ \cite{Lyons1994DIFFERENTIALED}.
\end{theorem}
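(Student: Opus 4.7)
The plan is a Picard-iteration / contraction-mapping argument in the space of continuous paths of finite $p$-variation, applied on a sufficiently small initial interval $[0, T_1]$, and then propagated forward by finite iteration. Both existence and uniqueness fall out simultaneously from Banach's fixed-point theorem. Since $p < \gamma$ and $1 \leq p < 2$, the vector field $f$ has at least one bounded derivative, and the Young integral $\int f(Y)\,dX$ is well-defined for any continuous $Y$ of finite $p$-variation because $2/p > 1$.

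The central analytic tool is the Young--Lo\`eve estimate: for $g$ of finite $p$-variation and $X$ of finite $p$-variation,
\begin{equation}
\left|\int_s^t g_u \, dX_u - g_s\,(X_t - X_s)\right| \leq C_p \, \|g\|_{p,[s,t]} \, \|X\|_{p,[s,t]}.
\end{equation}
Define the Picard map $\mathcal{T}(Y)_t = Y_0 + \int_0^t f(Y_s)\,dX_s$. First I would check that $\mathcal{T}$ maps a closed ball $\mathcal{B}_R = \{Y : Y_0 \text{ fixed}, \|Y - Y_0\|_{p,[0,T_1]} \leq R\}$ into itself for suitable $R$ and small $T_1$, using the Young bound together with $\|f\|_\infty \leq \|f\|_{\text{Lip}(\gamma)}$ and the fact that $f \circ Y$ inherits finite $p$-variation from $Y$ via the Lipschitz property of $f$.

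The key step is then showing $\mathcal{T}$ is a contraction on $\mathcal{B}_R$. For $Y, Z \in \mathcal{B}_R$ with common initial value, set $\Delta = Y - Z$ and bound $\mathcal{T}(Y) - \mathcal{T}(Z)$ by applying the Young estimate to $f(Y) - f(Z)$ against $X$. This reduces matters to controlling $\|f(Y) - f(Z)\|_{p,[0,T_1]}$ in terms of $\|\Delta\|_{p,[0,T_1]}$. Using the representation
\begin{equation}
f(Y_t) - f(Z_t) - f(Y_s) + f(Z_s) = \int_0^1 \bigl[Df(Z_t + u\Delta_t)\,\Delta_t - Df(Z_s + u\Delta_s)\,\Delta_s\bigr]\,du
\end{equation}
together with the $(\gamma-1)$-H\"older continuity of $Df$ (or the higher-order Taylor-like bound \eqref{eq:lipbound2} from Definition \ref{def:lipgamma} when $\gamma > 2$), I would obtain
\begin{equation}
\|f(Y) - f(Z)\|_{p,[0,T_1]} \leq C \bigl(1 + \|Y\|_{p,[0,T_1]}^{\gamma-1} + \|Z\|_{p,[0,T_1]}^{\gamma-1}\bigr) \|\Delta\|_{p,[0,T_1]}.
\end{equation}
The hypothesis $\gamma > p$ is precisely what ensures this estimate closes at the $p$-variation level, since the H\"older exponent $\gamma-1$ must match correctly against the Young exponent when multiplying $p$-variation norms. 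Combining with the Young bound and choosing $T_1$ small enough that $C(R)\,\|X\|_{p,[0,T_1]} < 1$ (possible because $t \mapsto \|X\|_{p,[0,t]}$ is continuous with value $0$ at $t=0$) makes $\mathcal{T}$ a strict contraction. Banach's theorem yields a unique solution on $[0, T_1]$; since $T_1$ depends only on global quantities ($\|f\|_{\text{Lip}(\gamma)}$ and the modulus of continuity of the $p$-variation of $X$), finitely many iterations extend the unique solution to all of $[0, T]$.

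The main obstacle is the composition estimate on $\|f(Y) - f(Z)\|_p$: making the decomposition sharp enough that the $\gamma > p$ hypothesis is fully used, and handling the bookkeeping in the regime $\gamma > 2$ where higher-order Taylor terms from \eqref{eq:lipbound2} must be carried through in both arguments $Y_s$ and $Z_s$. Once this estimate is established, the fixed-point and interval-patching arguments are routine.
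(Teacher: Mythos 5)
The paper does not prove this theorem itself---it defers to \cite{Lyons1994DIFFERENTIALED} and \cite{lyons2007differential}---and your Picard-iteration argument in the $p$-variation metric, built on the Young--Lo\`eve estimate together with a composition bound for $f(Y)-f(Z)$, is essentially the standard proof given in those references, so the approach is correct. One small imprecision worth noting: the H\"older-remainder part of your difference estimate naturally closes in $p/(\gamma-1)$-variation rather than $p$-variation, and the hypothesis $\gamma>p$ enters precisely as the Young pairing condition $1/p+(\gamma-1)/p>1$ needed to integrate that remainder against $X$.
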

These theorems extend the classic differential equation existence and uniqueness results to controls with unbounded variation but finite $p-$variation for $p<2$. A proof of these theorems is can be found in \cite{lyons2007differential}. These theorems are sufficient for the differential equations considered in this paper. However, there are many settings where the control has infinite $p-$variation for all $p<2$, such as Brownian motion. The theory of rough paths was developed in order to give meaning to (\ref{eq:cde}) when the control's $p-$variation is finite only for $p\geq2$ \cite{Lyons1998}. An introduction to rough path theory can be found in \cite{lyons2007differential}.

\subsection{The Tensor Algebra}
\label{sec:app_norm}

Let $V$ be a Banach space and $V^{\otimes n}$ denote the tensor powers of $V$, 
\begin{equation}
    V^{\otimes n} = V \underbrace{\otimes \cdots \otimes}_{\text{n-1\;times}} V.
\end{equation} 
There is choice in the norm of $V^{\otimes n}$. In this paper, we follow the setting of \citep{BOEDIHARDJO2016720} and \citep{lyons2002}. It is assumed that each $V^{\otimes n}$ is endowed with a norm such that the following conditions hold for all $v\in V^{\otimes n}$ and $w\in V^{\otimes m}$:
\begin{enumerate}
\item $||v|| = ||v_1\otimes \cdots \otimes v_n|| = ||v_{p(1)} \otimes \cdots \otimes v_{p(n)}||$ for all all bijective functions $p:\{1, \ldots, n\}\rightarrow \{1, \ldots, n\}$,
\item $|| v \otimes w || \leq ||v||\;||w||$,
\item for any bounded linear functional $f$ on $V^{\otimes n}$ and $g$ on $V^{\otimes m}$, there exists a unique bounded linear functional $f \otimes g$ on $V^{\otimes (m+n)}$ such that $(f \otimes g)(v \otimes w) = f(v)g(w). $
\end{enumerate}
\begin{definition} \emph{(The Tensor Algebra \citep{lyons2007differential})}
    For $n\geq1$, let $V^{\otimes n}$ be equipped with a norm satisfying the above conditions, and define $V^{\otimes 0}=\mathbb{R}$. The tensor algebra space is the set
    \begin{equation}
        T((V)) = \{\mathbf{x} = (x^0, x^1, \ldots ) | x^k \in V^{\otimes k}\}
    \end{equation}
    with product $\mathbf{z}=\mathbf{x}\otimes\mathbf{y}$ defined by
    \begin{equation}
        z^k = (\mathbf{x} \otimes \mathbf{y})^k = \sum_{j=0}^k x^j \otimes y^{k-j}.
    \end{equation}
\end{definition} 
The tensor algebra's product is associative and has unit $\mathbf{1} = (1,0,0,\ldots)$. As $T((V))$ is an associative algebra, it has a Lie algebra structure, with Lie bracket
\begin{equation}
[\mathbf{x}, \mathbf{y}] = \mathbf{x} \otimes \mathbf{y} - \mathbf{y} \otimes \mathbf{x}
\end{equation}
for $\mathbf{x},\mathbf{y} \in T((V))$ \citep{reutenauer1993free}.

\section{Proof of Theorem \ref{thm:lipnn}}
\label{app:lipnn_proof}

The proof of Theorem \ref{thm:lipnn} relies on two lemmas. The first is a bound on the $\text{Lip}(\gamma)-$norm of the composition of two $\mathrm{Lip}(\gamma)$ functions. The second is a bound on the $\mathrm{Lip}(2)-$norm of each layer of a fully connected neural network (FCNN).

\subsection{Composition of $\text{Lip}(\gamma)$ Functions}
\label{app:lipcomp}
\begin{lemma} \emph{(Composed $\text{Lip}(\gamma)-$norm \citep{composition})}
\label{lem:comp}
    Let $U$, $V$, and $W$ be Banach spaces and $\Sigma\subset U$ and $\Omega \subset V$ be closed. For $\gamma\geq 1$, let $f \in \text{Lip}(\gamma,\Sigma,\Omega)$ and $g \in \text{Lip}(\gamma,\Omega,W)$. Then the composition $g\circ f: \Omega \rightarrow W$ is $\text{Lip}(\gamma)$ with 
    \begin{equation}
    \label{eq:normcomp}
        ||g\circ f||_{\text{Lip}(\gamma)} \leq C_{\gamma}||g||_{\text{Lip}(\gamma)}\max\left\{||f||^{k+1}_{\text{Lip}(\gamma)},1\right\},
    \end{equation}
    where $k$ is the unique integer such that $\gamma\in(k, k+1]$ and $C_{\gamma}$ is a constant independent of $f$ and $g$.
\end{lemma}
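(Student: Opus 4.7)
The plan is to construct the candidate collection $\bigl((g\circ f)^0,(g\circ f)^1,\ldots,(g\circ f)^k\bigr)$ explicitly via a Faà di Bruno style formula, and then to verify the two conditions of Definition \ref{def:lipgamma} directly. For $j\in\{1,\ldots,k\}$ and $x\in\Sigma$, I would define $(g\circ f)^j(x)\in\mathbf{L}_s(U^{\otimes j},W)$ by a sum over set partitions $\pi=\{B_1,\ldots,B_{|\pi|}\}$ of $\{1,\ldots,j\}$, the term indexed by $\pi$ being the symmetrisation of $g^{|\pi|}(f(x))$ applied to $f^{|B_1|}(x)\otimes\cdots\otimes f^{|B_{|\pi|}|}(x)$, with $(g\circ f)^0 = g\circ f$. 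This is the natural symmetric-multilinear analogue of the classical formula for repeated derivatives of a composition, and it is the only candidate consistent with the smooth case.

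With this definition in hand, the uniform bound \eqref{eq:lipbound1} is relatively straightforward: each summand is controlled using the submultiplicativity of the tensor norm together with the uniform $\text{Lip}(\gamma)$ bounds on $g^{|\pi|}(f(x))$ and on each $f^{|B_i|}(x)$. The number of set partitions of $\{1,\ldots,j\}$ for $j\leq k$ is a combinatorial constant depending only on $\gamma$, which I would absorb into $C_\gamma$.

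The main technical hurdle is the Taylor-remainder inequality \eqref{eq:lipbound2}. For $j=0$, the strategy is to start from $g(f(y))$ and apply the $\text{Lip}(\gamma)$ expansion of $g$ around $f(x)$ to order $k$, which introduces a remainder bounded by $\|g\|_{\text{Lip}(\gamma)}|f(y)-f(x)|^{\gamma}\leq\|g\|_{\text{Lip}(\gamma)}\|f\|_{\text{Lip}(\gamma)}^{\gamma}|y-x|^{\gamma}$. Each factor $f(y)-f(x)$ that appears inside this expansion is then replaced by its own $\text{Lip}(\gamma)$ expansion of $f$ around $x$ to order $k$. Regrouping the resulting double sum by total degree in $(y-x)$, the terms of degree at most $k$ must be shown, by a purely algebraic identity in the symmetric tensor algebra, to coincide with $\sum_{l=0}^{k}\tfrac{1}{l!}(g\circ f)^{l}(x)(y-x)^{\otimes l}$. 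What remains is a sum of cross-remainders, each a product of a bounded multilinear factor from $g$ (or its Taylor tail) and powers of Taylor tails of $f$; the key estimate is that every such cross-term carries a factor of $|y-x|^{\gamma}$ or higher. For general $j\in\{1,\ldots,k\}$, the same procedure is applied to the Faà di Bruno expression defining $(g\circ f)^j(y)$, yielding the same polynomial-plus-remainder structure with polynomial part matching $\sum_{l=0}^{k-j}\tfrac{1}{l!}(g\circ f)^{j+l}(x)\bigl(\,\cdot\,\otimes(y-x)^{\otimes l}\bigr)$ and total order $|y-x|^{\gamma-j}$ in the remainder.

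The bottleneck I anticipate is the combinatorial bookkeeping: verifying that the polynomial part really does recombine into the Faà di Bruno expression for $(g\circ f)^{j+l}(x)$, and carefully tracking which cross-remainders cancel against which polynomial terms. The exponent $k+1$ in $\max\{\|f\|_{\text{Lip}(\gamma)}^{k+1},1\}$ should emerge naturally from the most unfavourable case, in which a Faà di Bruno term has all $k$ tensor factors coming from $f^1$ together with one additional factor arising from a Taylor remainder of $f$, producing up to $k+1$ multiplicative copies of $\|f\|_{\text{Lip}(\gamma)}$. The maximum with $1$ simply covers the regime $\|f\|_{\text{Lip}(\gamma)}<1$, where the dominant contribution comes from the $\|g\|_{\text{Lip}(\gamma)}$ factor alone.
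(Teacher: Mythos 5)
Your plan is essentially the paper's own argument: the paper defines $(g\circ f)^0,\ldots,(g\circ f)^k$ by the generalised chain rule and verifies Definition \ref{def:lipgamma} by explicit calculation, which is precisely the Fa\`a di Bruno expansion and remainder bookkeeping you describe, and the fully worked case $\gamma\in(1,2]$ in Lemma \ref{lem:normcomplip2} follows your outline step for step (including the exponent $k+1$ arising from the worst product of copies of $\|f\|_{\text{Lip}(\gamma)}$). The one point your sketch glosses over is the regime $\|y-x\|_U>1$, where ``every cross-term carries a factor of $|y-x|^{\gamma}$ or higher'' no longer yields the required bound and one must instead fall back on the uniform estimates \eqref{eq:lipbound1} together with $\|y-x\|_U^{\gamma}>1$, exactly as the paper's detailed $\gamma\in(1,2]$ proof does in its case split.
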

The original statement of lemma \ref{lem:comp} in \citep{composition} gives \eqref{eq:normcomp} as
    \begin{equation}
    \label{eq:compboundincorrect}
        ||g\circ f||_{\text{Lip}(\gamma)} \leq C_{\gamma}||g||_{\text{Lip}(\gamma)}\max\left\{||f||^{\textcolor{red}{k}}_{\text{Lip}(\gamma)},1\right\}.
    \end{equation}
    We believe this is a small erratum, as for $g:[0,1]\rightarrow[0,1]$ defined as $g(x)=x$, \eqref{eq:compboundincorrect} implies there exists $C_1>0$ such that
    \begin{equation}
    ||g\circ f||_{\text{Lip}(1)} = ||f||_{\text{Lip}(1)} \leq C_1||g||_{\text{Lip}(1)}=C_1
    \end{equation}
    for all bounded and Lipschitz $f:[0,1]\rightarrow[0,1]$. As a counterexample, for any $C_1>0$, take $f(x)=x^{n}$ with $n>\max\{C_1, 1\}$. The following proof of lemma \ref{lem:comp} is given in \citep{composition}.
\begin{proof}
    Let $(g\circ f)^0, \ldots, (g\circ f)^k$ be defined by the generalisation of the chain rule to higher derivatives. Explicit calculation can be used to verify that if $f$ and $g$ are $\text{Lip}(\gamma)$, definition \ref{def:lipgamma} implies $g\circ f$ is $\text{Lip}(\gamma)$ with $||g\circ f||_{\text{Lip}(\gamma)}$ obeying \eqref{eq:normcomp}.
\end{proof}
Bounding the $\text{Lip}(\gamma)-$norm of a neural network (NN) requires an explicit form for $C_{\gamma}$ in \eqref{eq:normcomp}. This can be obtained via the explicit calculations mentioned in the proof of lemma \ref{lem:comp}. Here, we present the case $\gamma \in (1,2]$.
\begin{lemma} 
    \label{lem:normcomplip2}
    Let $U$, $V$, and $W$ be Banach spaces and $\Sigma\subset U$ and $\Omega \subset V$ be closed. For $\gamma \in (1,2]$, let $f = ( f^{(0)} , f^{(1)} ) \in \text{Lip}(\gamma,\Sigma,\Omega)$ and $g = (g^{(0)} , g^{(1)} ) \in \text{Lip}(\gamma,\Omega,W)$. Consider $h^{(0)} : \Sigma \to W$ and $h^{(1)} : \Sigma \to \mathbf{L}(V,W)$ defined for $p \in \Sigma$ and $v \in V$ by 
    \begin{equation}
    \label{eq:hdef}
    		h^{(0)}(p) := g^{(0)} \left( f^{(0)}(p) \right)
    		\qquad \text{and} \qquad 
    		h^{(1)}(p)[v] := g^{(1)}\left( f^{(0)}(p) \right) \left[ f^{(1)}(p)[v] \right].
    \end{equation}
    Then $h := \left( h^{(0)} , h^{(1)} \right) \in \text{Lip}(\gamma,\Sigma,W)$ and 
    \begin{equation}
    	\label{eq:normcomplip2}
    		|| h ||_{\text{Lip}(\gamma,\Sigma,W)} 
    		\leq 
    		\left( 1 + 2^{\gamma} \right) || g ||_{\text{Lip}(\gamma,\Omega,W)} \max \left\{ 1 , || f ||_{\text{Lip}(\gamma,\Sigma,\Omega)}^{\gamma} \right\}. 
    \end{equation}
\end{lemma}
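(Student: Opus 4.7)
My plan is to verify, directly from Definition \ref{def:lipgamma} with $k=1$, each of the four bounds comprising membership in $\text{Lip}(\gamma,\Sigma,W)$ for $h=(h^{(0)},h^{(1)})$: sup-bounds on $h^{(0)}$ and $h^{(1)}$, a $(\gamma-1)$-H\"older-type bound on the increments of $h^{(1)}$, and a Taylor-remainder bound of order $\gamma$ for $h^{(0)}$. Writing $M_f=\|f\|_{\text{Lip}(\gamma,\Sigma,\Omega)}$ and $M_g=\|g\|_{\text{Lip}(\gamma,\Omega,W)}$, the two sup-bounds drop out of \eqref{eq:hdef} and submultiplicativity of the operator norm as $M_g$ and $M_fM_g$, both of which are dominated by the target constant.

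The only preparatory step that is not immediate is a uniform increment bound on $f^{(0)}$ itself. Using $\|f^{(0)}\|_\infty\le M_f$ together with the Taylor estimate $|f^{(0)}(q)-f^{(0)}(p)|\le M_f|q-p|+M_f|p-q|^{\gamma}$, I would argue by the case split $|p-q|\le 1$ (where $|p-q|^\gamma\le|p-q|$ since $\gamma>1$) versus $|p-q|>1$ (where the sup-bound dominates). Both cases collapse to the single estimate $|f^{(0)}(q)-f^{(0)}(p)|^{\alpha} \le (2M_f)^{\alpha}|p-q|^{\alpha}$, valid for all $p,q\in\Sigma$ and for the two exponents $\alpha\in\{\gamma-1,\gamma\}$ needed later. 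This is the device that converts smoothness in $\Omega$ into smoothness in $\Sigma$.

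For the increment bound on $h^{(1)}$, I would add and subtract $g^{(1)}(f^{(0)}(q))\circ f^{(1)}(p)$ to split
\begin{equation*}
h^{(1)}(q)-h^{(1)}(p)=g^{(1)}(f^{(0)}(q))\circ\bigl(f^{(1)}(q)-f^{(1)}(p)\bigr)+\bigl(g^{(1)}(f^{(0)}(q))-g^{(1)}(f^{(0)}(p))\bigr)\circ f^{(1)}(p);
\end{equation*}
the first summand is bounded by $M_gM_f|p-q|^{\gamma-1}$ via the sup-bound on $g^{(1)}$ and the H\"older condition on $f^{(1)}$, and the second by $M_g(2M_f)^{\gamma-1}M_f|p-q|^{\gamma-1}$ via the H\"older condition on $g^{(1)}$, the preparatory increment bound, and the sup-bound on $f^{(1)}$. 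For the Taylor remainder on $h^{(0)}$, substituting $f^{(0)}(q)=f^{(0)}(p)+f^{(1)}(p)(q-p)+R_f(p,q)$ into the first-order Taylor expansion of $g^{(0)}$ at $f^{(0)}(p)$ yields
\begin{equation*}
h^{(0)}(q)-h^{(0)}(p)-h^{(1)}(p)(q-p)=g^{(1)}(f^{(0)}(p))\bigl[R_f(p,q)\bigr]+R_g\bigl(f^{(0)}(p),f^{(0)}(q)\bigr),
\end{equation*}
with the two summands bounded respectively by $M_gM_f|p-q|^{\gamma}$ and $M_g(2M_f)^{\gamma}|p-q|^{\gamma}$.

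Assembling the four estimates and using $a\le\max\{1,a^{\gamma}\}$ for $a\ge 0$ collapses every constant under a single factor $M_g(1+2^{\gamma})\max\{1,M_f^{\gamma}\}$, which is precisely the target \eqref{eq:normcomplip2}. The main obstacle is not conceptual but bookkeeping: one must verify that the Taylor-remainder term for $h^{(0)}$ produces exactly the leading constant $1+2^{\gamma}$ and that all other bounds are strictly weaker, because any slack here would propagate through the iterated composition in Theorem \ref{thm:lipnn} and visibly worsen the $m!$ exponent in \eqref{eq:lipnn}.
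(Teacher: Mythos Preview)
Your proposal is correct and follows essentially the same approach as the paper: the same chain-rule decompositions for $h^{(0)}$ and $h^{(1)}$, the same $2M_f$ increment bound on $f^{(0)}$ obtained from the case split $|p-q|\le 1$ versus $|p-q|>1$, and the same final collapse via $a\le\max\{1,a^{\gamma}\}$. The only organisational difference is that you perform the case split once, up front, to get a global Lipschitz estimate $|f^{(0)}(q)-f^{(0)}(p)|\le 2M_f|p-q|$ and then run the remainder arguments uniformly in $p,q$, whereas the paper carries the split through to the $R^h_0,R^h_1$ estimates themselves; your packaging is slightly cleaner but the content is identical.
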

\begin{proof}
From definition \ref{def:lipgamma}, $f^{(0)} : \Sigma \to \Omega$, $f^{(1)} : \Sigma \to \mathbf{L}(U,V)$, $g^{(0)} : \Omega \to W$ and $g^{(1)} : \Omega \to \mathbf{L}(V,W)$. Furthermore, for all $p\in\Sigma$
\begin{equation}
	\label{f_pointwise_bounds}
		(\bI) \quad \left|\left| f^{(0)}(p) \right|\right|_V \leq ||f||_{\text{Lip}(\gamma,\Sigma,\Omega)}
		\qquad \text{and} \quad 
		(\bII) \quad \left|\left| f^{(1)}(p) \right|\right|_{\mathbf{L}(U,V)} \leq ||f||_{\text{Lip}(\gamma,\Sigma,\Omega)}.
\end{equation}
Similarly, for all $x \in \Omega$ we have that
\begin{equation}
	\label{g_pointwise_bounds}
		(\bI) \quad \left|\left| g^{(0)}(x) \right|\right|_W \leq ||g||_{\text{Lip}(\gamma,\Omega,W)}
		\qquad \text{and} \quad 
		(\bII) \quad \left|\left| g^{(1)}(x) \right|\right|_{\mathbf{L}(V,W)} \leq ||g||_{\text{Lip}(\gamma,\Omega,W)}.
\end{equation}
Define 
$R^f_0 : \Sigma \times \Sigma \to V$ and $R^f_1 : \Sigma \times \Sigma \to \mathbf{L}(U,V)$ by
\begin{equation}
	\label{f_remainder_term_defs}
    \begin{aligned}
		R^f_0 (p,q) &:= f^{(0)}(q) - f^{(0)}(p) - f^{(1)}(p)[q-p], \\
		R^f_1(p,q)[u] &:= f^{(1)}(q)[u] - f^{(1)}(p)[u],
  \end{aligned}
\end{equation}
for any $p,q \in \Sigma$ and $u \in U$. Then
\begin{equation}
	\label{f_remainder_term_bounds}
		\begin{aligned}
			&	(\bI) \quad \left|\left| R^f_0(p,q) \right|\right|_V \leq ||f||_{\text{Lip}(\gamma,\Sigma,\Omega)} ||q-p||_U^{\gamma}, \\
			&	(\bII) \quad \left|\left| R^f_1(p,q) \right|\right|_{\mathbf{L}(U,V)} \leq ||f||_{\text{Lip}(\gamma,\Sigma,\Omega)} ||q-p||_U^{\gamma - 1}.
		\end{aligned}
\end{equation}
Similarly, define
$R^g_0 : \Omega \times \Omega \to W$ and $R^g_1 : \Omega \times \Omega \to \mathbf{L}(V,W)$ by
\begin{equation}
	\label{g_remainder_term_defs}
    \begin{aligned}
		R^g_0 (x,y) &:= g^{(0)}(y) - g^{(0)}(x) - g^{(1)}(x)[y-x], \\
		R^g_1(x,y)[v] &:= g^{(1)}(y)[v] - g^{(1)}(x)[v],
  \end{aligned}
\end{equation}
for $x,y \in \Omega$ and $v \in V$. Then,
\begin{equation}
	\label{g_remainder_term_bounds}
		\begin{aligned}
			&	(\bI) \quad \left|\left| R^g_0(x,y) \right|\right|_W \leq || g ||_{\text{Lip}(\gamma,\Omega,W)} ||y-x||_V^{\gamma} \\
			&	(\bII) \quad \left|\left| R^g_1(x,y) \right|\right|_{\mathbf{L}(V,W)} \leq || g ||_{\text{Lip}(\gamma,\Omega,W)} ||y-x||_V^{\gamma - 1}.
		\end{aligned}
\end{equation}
Define $h^{(0)} : \Sigma \to W$ and $h^{(1)} : \Sigma \to \mathbf{L}(V,W)$ as in \eqref{eq:hdef},
\begin{equation}
	\label{lip_gamma_chain_rule_h_def_proof}
		h^{(0)}(p) := g^{(0)} \left( f^{(0)}(p) \right)
		\qquad \text{and} \qquad 
		h^{(1)}(p)[u] := g^{(1)}\left( f^{(0)}(p) \right) \left[ f^{(1)}(p)[u] \right],
\end{equation}
for $p \in \Sigma$ and $u \in U$. Finally, define remainder terms $R^h_0 : \Sigma \times \Sigma \to W$ and $R^h_1 : \Sigma \times \Sigma \to \mathbf{L}(U,W)$ by 
\begin{equation}
	\label{h_remain_terms_def}
 \begin{aligned}
		R^h_0(p,q) &:= h^{(0)}(q) - h^{(0)}(p) - h^{(1)}(p)[q-p], \\
		R^h_1(p,q)[u] &:= h^{(1)}(q)[u] - h^{(1)}(p)[u],
  \end{aligned}
\end{equation}
for $p,q \in \Sigma$ and $u \in U$. We now establish that $h = ( h^{(0)} , h^{(1)} ) \in \text{Lip}(\gamma,\Sigma,W)$ and that the norm estimate claimed in 
\eqref{eq:normcomplip2} is satisfied.

First we consider the bounds on $h^{(0)}$ and $h^{(1)}$. For any $p \in \Sigma$,   
(\bI) in \eqref{g_pointwise_bounds} implies that
\begin{equation}
	\label{h0_bd}
		\left|\left| h^{(0)}(p) \right|\right|_W 
		= 
		\left|\left| g^{(0)} \left( f^{(0)}(p) \right)  \right|\right|_W
		\leq 
		|| g ||_{\text{Lip}(\gamma,\Omega,W)}
\end{equation}
since $f^{(0)}(p) \in \Omega$. Further, for any $p \in \Sigma$ and any $u \in U$, \eqref{g_pointwise_bounds} and (\bII) in \eqref{f_pointwise_bounds} imply that
\begin{align*}
	\left|\left| h^{(1)}(p)[u] \right|\right|_W &= 
		\left|\left| g^{(1)} \left( f^{(0)}(p) \right) \left[ f^{(1)}(p)[u] \right] \right|\right|_W \\
		&\leq \left|\left| g^{(1)} \left( f^{(0)}(p) \right) \right|\right|_{\mathbf{L}(V,W)} \left|\left| f^{(1)}(p) \right|\right|_{\mathbf{L}(U,V)} ||u||_U \\
		&\leq
		|| g ||_{\text{Lip}(\gamma,\Omega,W)} || f ||_{\text{Lip}(\gamma,\Sigma,\Omega)} ||u||_U
\end{align*}
since $f^{(0)}(p) \in \Omega$. Taking the supremum over $u \in U$ with unit $U$-norm, it follows that
\begin{equation}
	\label{h1_bd}
		\left| \left| h^{(1)}(p) \right|\right|_{\mathbf{L}(U,W)} \leq || g ||_{\text{Lip}(\gamma,\Omega,W)} || f ||_{\text{Lip}(\gamma,\Sigma,\Omega)}.
\end{equation}
Now we consider the bounds on $R^h_0$ and $R^h_1$. For this purpose we fix $p,q \in \Sigma$ and $u \in U$. We first assume that $||q - p||_U > 1$. 
In this case we may use \eqref{h0_bd} and \eqref{h1_bd} to compute that
\begin{align*}
	\left|\left| R^h_0(p,q) \right|\right|_W &= \left| \left| h^{(0)}(q) - h^{(0)}(p) - h^{(1)}(p)[q-p] \right|\right|_W \\
		&\leq
		2 || g ||_{\text{Lip}(\gamma,\Omega,W)} + || g ||_{\text{Lip}(\gamma,\Omega,W)} || f ||_{\text{Lip}(\gamma,\Sigma,\Omega)} || q - p ||_U.
\end{align*}
Since $\gamma > 1$ means that $1 < || q - p ||_U < || q  - p ||_U^{\gamma}$, we deduce that
\begin{equation}
	\label{Rh0_bd_a}
		\left| \left| R^h_0(p,q) \right|\right|_W \leq 
		|| g ||_{\text{Lip}(\gamma,\Omega,W)} \left( 2 + || f ||_{\text{Lip}(\gamma,\Sigma,\Omega)} \right) || q - p ||_U^{\gamma}.
\end{equation}
Similarly, we may use \eqref{h1_bd} and that $ ||q-p||_U^{\gamma - 1} > 1$ to compute that
\begin{equation}
	\label{Rh1_bd_a_with_v}
	\left|\left| R^h_1(p,q)[u] \right|\right|_W = \left|\left| h^{(1)}(q)[u] - h^{(1)}(p)[u] \right|\right|_W 
		\leq
		2 || g ||_{\text{Lip}(\gamma,\Omega,W)} || f ||_{\text{Lip}(\gamma,\Sigma,\Omega)} || q - p ||_U^{\gamma - 1} ||u||_U.
\end{equation}
Taking the supremum over $u \in U$ with unit $U$-norm in \eqref{Rh1_bd_a_with_v} yields the estimate that
\begin{equation}
	\label{Rh1_bd_a}
		\left|\left| R^h_1(p,q) \right|\right|_{\mathbf{L}(V,W)} 
		\leq
		2 || g ||_{\text{Lip}(\gamma,\Omega,W)} || f ||_{\text{Lip}(\gamma,\Sigma,\Omega)} || q - p ||_U^{\gamma - 1}.
\end{equation}
Together, \eqref{Rh0_bd_a} and \eqref{Rh1_bd_a} establish the remainder term estimates required to conclude that $h = (h^{(0)} , h^{(1)} ) \in \text{Lip}(\gamma,\Sigma,W)$
in the case that $|| q - p ||_U > 1$. 
\\ \\
We next establish similar remainder term estimates when $|| q - p ||_U < 1$. Thus we fix $p,q \in \Sigma$ and assume that $||q-p||_U < 1$.
Note that $\gamma > 1$ means that $||q - p||_U^{\gamma} < ||q-p||_U < 1$. Additionally,
\begin{equation}
\label{f0_diff_bd}
    \begin{aligned}
        \left|\left| f^{(0)}(q) - f^{(0)}(p) \right|\right|_V &\stackrel{(\ref{f_remainder_term_defs})}{=} \left|\left|f^{(1)}(p)[q-p] + R^f_0(p,q) \right|\right|_V, \\
        &\;\;\leq || f ||_{\text{Lip}(\gamma,\Sigma,\Omega)} \left( ||q-p||_U + ||q-p||_U^{\gamma} \right), \\
		&\;\;\leq 2 || f ||_{\text{Lip}(\gamma,\Sigma,\Omega)} ||q-p||_U,
    \end{aligned}
\end{equation}
where (\bII) in \eqref{f_pointwise_bounds} and (\bI) in \eqref{f_remainder_term_bounds} have been used. We now consider the term $R^h_0(p,q)$. We start by observing that 
\begin{align*}
	R^h_0(p,q) &\stackrel{(\ref{h_remain_terms_def})}{=} h^{(0)}(q) - h^{(0)}(p) - h^{(1)}(p)[q-p] \\
		&\stackrel{(\ref{lip_gamma_chain_rule_h_def_proof})}{=} 
		g^{(0)}\left( f^{(0)}(q) \right) - g^{(0)} \left( f^{(0)}(p) \right) - g^{(1)}\left(f^{(0)}(p) \right) \left[ f^{(1)}(p)[q-p] \right]  \\
		&\stackrel{(\ref{g_remainder_term_defs})}{=}
		g^{(1)}\left( f^{(0)}(p) \right) \left[ f^{(0)}(q) - f^{(0)}(p) -  f^{(1)}(p)[q-p] \right] + R^g_0\left( f^{(0)}(p) , f^{(0)}(q) \right) \\
		&\stackrel{(\ref{f_remainder_term_defs})}{=}
		g^{(1)}\left( f^{(0)}(p) \right) \left[ R^f_0(p,q) \right] + R^g_0\left( f^{(0)}(p) , f^{(0)}(q) \right).
\end{align*}
Consequently, by using (\bII) in \eqref{g_pointwise_bounds} to estimate the term $g^{(1)}\left( f^{(0)}(p) \right)$, 
(\bI) in \eqref{f_remainder_term_bounds} to estimate the term $R^f_0(p,q)$, and 
(\bI) in \eqref{g_remainder_term_bounds} to estimate the term $R^g_0\left( f^{(0)}(p) , f^{(0)}(q) \right)$, 
we may deduce that
\begin{equation}
	\label{Rh0_bd_b_almost}
		\left|\left| R^h_0(p,q) \right|\right|_W \leq 
		|| g ||_{\text{Lip}(\gamma,\Omega,W)} \left( || f ||_{\text{Lip}(\gamma,\Sigma,\Omega)} || q - p ||_U^{\gamma} + 
		\left|\left| f^{(0)}(q) - f^{(0)}(p) \right|\right|_V^{\gamma} \right).
\end{equation}
The combination of \eqref{f0_diff_bd} and \eqref{Rh0_bd_b_almost} yields the estimate 
\begin{equation}
	\label{Rh0_bd_b}
		\left|\left| R^h_0(p,q) \right|\right|_W \leq 
		|| g ||_{\text{Lip}(\gamma,\Omega,W)} \left( || f ||_{\text{Lip}(\gamma,\Sigma,\Omega)}  + 
		2^{\gamma} ||f||_{\text{Lip}(\gamma,\Sigma,\Omega)}^{\gamma} \right) || q - p ||_U^{\gamma}.
\end{equation}
Turning our attention to $R^h_1$, we fix $u \in U$ and compute that
\begin{align*}
	R^h_1(p,q)[u] &\stackrel{(\ref{h_remain_terms_def})}{=} h^{(1)}(q)[u] - h^{(1)}(p)[u]  \\
		&\stackrel{(\ref{lip_gamma_chain_rule_h_def_proof})}{=}  
		g^{(1)}\left( f^{(0)}(q) \right) \left[ f^{(1)}(q)[u] \right] - g^{(1)}\left( f^{(0)}(p) \right) \left[ f^{(1)}(p)[u] \right] \\
		&\stackrel{(\ref{g_remainder_term_defs})}{=} 
		g^{(1)} \left( f^{(0)}(p) \right) \left[ f^{(1)}(q)[u] - f^{(1)}(p)[u] \right] + R^g_1 \left( f^{(0)}(p) , f^{(0)}(q) \right) \left[ f^{(1)}(q)[u] \right]  \\
		&\stackrel{(\ref{f_remainder_term_defs})}{=} 
		g^{(1)} \left( f^{(0)}(p) \right) \left[ R^f_1(p,q)[u] \right] + R^g_1 \left( f^{(0)}(p) , f^{(0)}(q) \right) \left[ f^{(1)}(q)[u] \right].
\end{align*}
Consequently, by using (\bII) in \eqref{g_pointwise_bounds} to estimate the term $g^{(1)} \left( f^{(0)}(p) \right)$,
(\bII) in \eqref{f_pointwise_bounds} to estimate the term $f^{(1)}(q)$,
(\bII) in \eqref{f_remainder_term_bounds} to estimate the term $R^f_1(p,q)$,  and 
(\bII) in \eqref{g_remainder_term_bounds} to estimate the term $R^g_1 \left( f^{(0)}(p) , f^{(0)}(q) \right)$, 
we may deduce that
\begin{equation}
	\label{Rh1_bd_b_almost}
		\left|\left| R^h_1(p,q)[u] \right|\right|_W \leq 
		|| g ||_{\text{Lip}(\gamma,\Omega,W)} || f ||_{\text{Lip}(\gamma,\Sigma,\Omega)} \left( || q - p ||_U^{\gamma - 1}
		+ 
		\left|\left| f^{(0)}(q)  - f^{(0)}(p) \right|\right|_{V}^{\gamma - 1}  \right) ||u||_U.
\end{equation}
The combination of \eqref{f0_diff_bd} and \eqref{Rh1_bd_b_almost} yields the estimate that
\begin{equation}
	\label{Rh1_bd_b_almost2}
		\left|\left| R^h_1(p,q)[u] \right|\right|_W \leq 
		|| g ||_{\text{Lip}(\gamma,\Omega,W)} \left( || f ||_{\text{Lip}(\gamma,\Sigma,\Omega)} 
		+ 2^{\gamma - 1}||f||_{\text{Lip}(\gamma,\Sigma,\Omega)}^{\gamma}   \right) ||q - p||_U^{\gamma - 1} ||u||_U.
\end{equation}
Taking the supremum over $u \in U$ with unit $U$-norm in \eqref{Rh1_bd_b_almost2} yields the estimate that
\begin{equation}
	\label{Rh1_bd_b}
		\left|\left| R^h_1(p,q) \right|\right|_{\mathbf{L}(V,W)} \leq 
		|| g ||_{\text{Lip}(\gamma,\Omega,W)} \left( || f ||_{\text{Lip}(\gamma,\Sigma,\Omega)}  + 
		2^{\gamma - 1} ||f||_{\text{Lip}(\gamma,\Sigma,\Omega)}^{\gamma} \right) || q - p ||_U^{\gamma - 1}.
\end{equation}
Finally, we complete the proof by combining the various estimates we have established for $h$ to obtain the $\text{Lip}(\gamma,\Sigma,W)$-norm bound 
claimed in \eqref{eq:normcomplip2}.

We start this task by combining \eqref{Rh0_bd_a} and \eqref{Rh0_bd_b} to deduce that for every $p,q \in \Sigma$ we have 
\begin{equation}
	\label{Rh0_bd_c}
    \begin{aligned}
		\left|\left| R^h_0(p,q) \right|\right|_W \leq 
		\twopartdef{|| g ||_{\text{Lip}(\gamma,\Omega,W)} \left( 2 + || f ||_{\text{Lip}(\gamma,\Sigma,\Omega)} \right) || q - p ||_U^{\gamma}}{||q-p||_U > 1}
		{|| g ||_{\text{Lip}(\gamma,\Omega,W)} \left( || f ||_{\text{Lip}(\gamma,\Sigma,\Omega)}  + 
		2^{\gamma} ||f||_{\text{Lip}(\gamma,\Sigma,\Omega)}^{\gamma} \right) || q - p ||_U^{\gamma}}{||q - p ||_U \leq 1.}
  \end{aligned}
\end{equation}
Moreover, the combination of \eqref{Rh1_bd_a} and \eqref{Rh1_bd_b} yields the estimate that
\begin{equation}
\small
	\label{Rh1_bd_c}
		\left|\left| R^h_1(p,q) \right|\right|_{\mathbf{L}(V,W)} \leq 
		\twopartdef{2 || g ||_{\text{Lip}(\gamma,\Omega,W)} || f ||_{\text{Lip}(\gamma,\Sigma,\Omega)} || q - p ||_U^{\gamma - 1}}{||q-p||_U > 1}
		{|| g ||_{\text{Lip}(\gamma,\Omega,W)} \left( 
        || f ||_{\text{Lip}(\gamma,\Sigma,\Omega)}  + 
		2^{\gamma - 1} ||f||_{\text{Lip}(\gamma,\Sigma,\Omega)}^{\gamma} \right) || q - p ||_U^{\gamma - 1}}{||q - p ||_U \leq 1.}
\end{equation}
A consequence of \eqref{Rh0_bd_c} is that 
\begin{equation}
	\label{Rh0_bd_d}
		\left|\left| R^h_0(p,q) \right|\right|_W \leq \left( 1 + 2^{\gamma} \right)|| g ||_{\text{Lip}(\gamma,\Omega,W)} 
		\max \left\{||f||_{\text{Lip}(\gamma,\Sigma,\Omega)}^{\gamma} , 1 \right\} ||q - p||_U^{\gamma},
\end{equation}
whilst a consequence of \eqref{Rh1_bd_c} is that 
\begin{equation}
	\label{Rh1_bd_d}
		\left|\left| R^h_1(p,q) \right|\right|_{\mathbf{L}(V,W)}\leq \left( 1 + 2^{\gamma - 1} \right) || g ||_{\text{Lip}(\gamma,\Omega,W)} 
		\max \left\{||f||_{\text{Lip}(\gamma,\Sigma,\Omega)}^{\gamma}, 1 \right\} ||q - p||_U^{\gamma - 1}.
\end{equation}
Therefore, by combining \eqref{h0_bd}, \eqref{h1_bd}, \eqref{Rh0_bd_d}, and \eqref{Rh1_bd_d}, we conclude both that $h = (h^{(0)} , h^{(1)} ) \in \text{Lip}(\gamma,\Sigma,W)$ 
and that
\begin{equation}
	\label{h_lip_gamma_norm}
		|| h ||_{\text{Lip}(\gamma,\Sigma,W)} \leq 
        \left( 1 + 2^{\gamma} \right) 
        || g ||_{\text{Lip}(\gamma,\Omega,W)} 
		\max \left\{ 
        ||f||^{\gamma}_{\text{Lip}(\gamma,\Sigma,\Omega)}, 1 \right\}.
\end{equation}
\end{proof}
Note that (\ref{eq:normcomplip2}) is a stricter bound than (\ref{eq:normcomp}), as for $\gamma\in(k, k+1]$,
\begin{equation}
    \max \left\{|| f ||_{\text{Lip}(\gamma,\Sigma,\Omega)}^{\gamma}, 1 \right\} \leq \max\left\{||f||^{k+1}_{\text{Lip}(\gamma)},1\right\}.
\end{equation}
There is equality when $||f||_{\text{Lip}(\gamma)}\leq 1$ or $\gamma=2$. 

\subsection{$\text{Lip}(2)-$norm of a Neural Network Layer}

Lemma \ref{lem:normcomplip2} allows us to bound the $\text{Lip}(2)-$norm of a neural network (NN) given a bound on the $\text{Lip}(2)-$norm of each layer of a NN. We demonstrate this here for a simple NN.
\begin{definition} \emph{(Fully Connected NN)}
\label{def:FCNN}
    Let $m,n_{in},n_{out},n_h\in\mathbb{N}$ and $f_{\theta}$ be a fully connected NN with $m$ layers, input dimension $n_{in}$, output dimension $n_{out}$, hidden dimension $n_h$, and activation function $\sigma$. Given an input $\mathbf{x}\in\mathbb{R}^{n_{in}}$, the output of the NN is defined as
    \begin{equation}
        \mathbf{y} = L^m(\cdots (L^1(\mathbf{x}))\cdots),
    \end{equation}
    where $L^1:\mathbb{R}^{n_{in}}\rightarrow\mathbb{R}^{n_h}$, $L^i:\mathbb{R}^{n_h}\rightarrow\mathbb{R}^{n_h}$ for $i=2,\ldots,m-1$, and $L^m:\mathbb{R}^{n_h}\rightarrow\mathbb{R}^{n_{out}}$. Each layer is defined by
    \begin{equation}
        L^i(\mathbf{y}) = \begin{bmatrix} L^i_1(\mathbf{y}) \\ \vdots \\ L^i_\alpha(\mathbf{y}) \end{bmatrix} = \begin{bmatrix} \sigma(l^i_1(\mathbf{y})) \\ \vdots \\ \sigma(l^i_\alpha(\mathbf{y}) )\end{bmatrix} = \begin{bmatrix} \sigma(W^i_1 \cdot \mathbf{y} + b^i_1) \\ \vdots \\ \sigma(W^i_\alpha \cdot \mathbf{y} + b^i_\alpha) \end{bmatrix},
    \end{equation}
    where $\mathbf{y}\in\mathbb{R}^\beta$, $W^i=[W^i_1, \ldots, W^i_\alpha]^T\in\mathbb{R}^{\alpha\times \beta}$ and $\mathbf{b}^i =[b^i_1, \ldots, b^i_\alpha]^T\in\mathbb{R}^\alpha$ are the learnable parameters and
    \begin{equation}
        (\alpha,\beta) = \begin{cases} (n_h,n_{in}), \qquad i=1, \\
        (n_h, n_h), \qquad \; i=2,\ldots,m-1, \\ (n_{out}, n_h), \qquad i=m.
        \end{cases}
    \end{equation}
\end{definition}
\begin{definition} \emph{(SiLU \citep{ELFWING20183})}
    The activation function $\text{\emph{SiLU}}:\mathbb{R}\rightarrow\mathbb{R}$ is defined as
\begin{equation}
    \text{\emph{SiLU}}(y) = \frac{y}{1+e^{-y}}.
\end{equation}
\end{definition}
\begin{lemma}
\label{lem:lip2layer}
    Let $f_{\theta}$ be a fully connected NN with activation function $\text{SiLU}$. Assume the input is normalised such that $\mathbf{x}=[x_1,\ldots,x_{n_{in}}]^T$ satisfies $|x_j|\leq1$ for $j=1,\ldots,n_{in}$. Then 
    \begin{equation}
        ||L^i||_{\text{Lip}(2)} \leq \left(\sum_{j=1}^n\max\left\{0.5||W^i_j||_2^2, 1.1||W^i_j||_2, \Gamma^i ||W^i_j||_2 + |b^i_j|\right\}^2\right)^{\frac{1}{2}},
    \end{equation}
    where
    \begin{equation}
        \Gamma^i = \sqrt{n_h}||W^{i-1}||_2\bigg(\cdots\Big(\sqrt{n_h}||W^{2}||_2\big(\sqrt{n_{in}}||W^1||_2+||\mathbf{b}^1||_2\big)+||\mathbf{b}^{2}||_2\Big)+\cdots\bigg)+||\mathbf{b}^{i-1}||_2.
    \end{equation}
\end{lemma}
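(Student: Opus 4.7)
The plan is to reduce the problem to per-coordinate $\text{Lip}(2)$ bounds and then aggregate them via the $\ell^2$ norm over output coordinates. For $\gamma = 2$ (so $k = 1$ in Definition \ref{def:lipgamma}), a valid $\text{Lip}(2)$ constant $M$ must simultaneously bound the function value, the Jacobian operator norm, the order-$1$ Jacobian increment (i.e. the Lipschitz constant of the Jacobian), and the order-$0$ Taylor remainder. The order-$0$ remainder is dominated by the gradient-Lipschitz condition via integral Taylor (with an extra factor of $\tfrac{1}{2}$), so it suffices to control three pointwise quantities for each layer $L^i$: the magnitude of $L^i$, the operator norm of $DL^i$, and the Lipschitz constant of $DL^i$.

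First I would establish the SiLU estimates by elementary calculus. Writing $s(y) = 1/(1+e^{-y}) \in (0,1)$, we have $\sigma(y) = y \, s(y)$, whence $|\sigma(y)| \leq |y|$. Direct differentiation gives $\sigma'(y) = s(y)(1 + y(1-s(y)))$ and $\sigma''(y) = s(y)(1-s(y))[2 + y(1-2s(y))]$, whose global maxima are $\approx 1.0998$ (near $y \approx 2.4$) and $0.5$ (at $y = 0$). Applying the chain rule to $L^i_j(\mathbf{y}) = \sigma(W^i_j \cdot \mathbf{y} + b^i_j)$, whose Jacobian is $\sigma'(z_j) W^i_j$ and whose Hessian has operator norm $|\sigma''(z_j)| \, ||W^i_j||_2^2$, these constants convert into
\[
|L^i_j(\mathbf{y})| \leq \Gamma^i ||W^i_j||_2 + |b^i_j|, \quad ||\nabla L^i_j(\mathbf{y})||_2 \leq 1.1\, ||W^i_j||_2, \quad ||\nabla^2 L^i_j(\mathbf{y})||_{op} \leq 0.5\, ||W^i_j||_2^2,
\]
with $\Gamma^i$ any uniform upper bound on $||\mathbf{y}||_2$ for inputs to layer $i$, and the Hessian estimate controlling the Lipschitz constant of $\nabla L^i_j$.

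The recursion for $\Gamma^i$ arises by induction on the layer index. The base case uses $|x_k| \leq 1$ to give $||\mathbf{x}||_2 \leq \sqrt{n_{in}}$. The inductive step combines the coordinate value bound with Minkowski and a Frobenius-to-operator conversion: $||L^i(\mathbf{y})||_2 \leq \bigl(\sum_j (||W^i_j||_2 \Gamma^i + |b^i_j|)^2\bigr)^{1/2} \leq \Gamma^i ||W^i||_F + ||\mathbf{b}^i||_2 \leq \sqrt{n_h}\, ||W^i||_2 \Gamma^i + ||\mathbf{b}^i||_2$, reproducing the factors of $\sqrt{n_h}$ in the stated formula.

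Finally, for the vector-valued map $L^i$, each pointwise $\text{Lip}(2)$ condition aggregates over components via Frobenius domination and Minkowski: $\sup ||L^i(\mathbf{y})||_2 \leq \bigl(\sum_j (\Gamma^i ||W^i_j||_2 + |b^i_j|)^2\bigr)^{1/2}$, $\sup ||DL^i(\mathbf{y})||_{op} \leq \bigl(\sum_j (1.1\, ||W^i_j||_2)^2\bigr)^{1/2}$, and the Lipschitz constant of $DL^i$ is at most $\bigl(\sum_j (0.5\, ||W^i_j||_2^2)^2\bigr)^{1/2}$. Since $\sqrt{\sum_j \max\{a_j,b_j,c_j\}^2}$ dominates each of $\sqrt{\sum_j a_j^2}$, $\sqrt{\sum_j b_j^2}$, and $\sqrt{\sum_j c_j^2}$, setting $M$ equal to this common upper bound verifies every condition in Definition \ref{def:lipgamma} and yields the claim. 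The main care I anticipate is ensuring that the order-$0$ Taylor remainder is absorbed into the gradient-Lipschitz bound through the factor $\tfrac{1}{2}$ from integral Taylor, so that the stated constants $0.5$ and $1.1$ in the conclusion are not inflated by double-counting.
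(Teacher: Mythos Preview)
Your proposal is correct and follows essentially the same route as the paper: per-coordinate analysis via the SiLU derivative bounds $\sup|\sigma'|\approx 1.1$ and $\sup|\sigma''|=0.5$, Taylor/mean-value control of the two remainder terms (with the order-$0$ remainder absorbed by the $\tfrac{1}{2}$ factor), recursive propagation of the input-norm bound to obtain $\Gamma^i$, and aggregation over output coordinates by the $\ell^2$ sum of the componentwise maxima. The only cosmetic difference is that the paper writes the Hessian explicitly as the rank-one matrix $\sigma''(l^i_j)\,W^i_j\otimes W^i_j$ and reads off its single nonzero eigenvalue, whereas you invoke the chain rule directly; the resulting bounds are identical.
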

\begin{proof} 
    Consider $L^1:A\rightarrow\mathbb{R}^n$, where $A\subset\mathbb{R}^{n_{in}}$ is the set of $\mathbf{x}=[x_1,\ldots,x_{n_{in}}]^T$ satisfying $|x_j|\leq1$ for $j=1,\ldots,n_{in}$. Each $L^1_j:\mathbb{R}^{n_{in}}\rightarrow\mathbb{R}$ is composed of a linear layer $l^1_j$ and a $\text{SiLU}$. 
    First, we show that
    \begin{equation}
        (L^1_j, \nabla L^1_j) = \left(\frac{l^1_j}{1+e^{-l^1_j}}, \frac{e^{l^1_j}(e^{l^1_j}+l^1_j+1)}{(e^{l^1_j}+1)^2}W^1_j\right) \in \text{Lip}(2),
    \end{equation}
    where $l^1_j(\mathbf{x})=W^1_j\cdot\mathbf{x}+b^1_j.$ So,
    \begin{equation}
        \max_{\mathbf{x}\in A}|L_j^1(\mathbf{x}) |\leq \max_{\mathbf{x}\in A}|l_j^1(\mathbf{x})| \leq \sqrt{n_{in}}||W^1_j||_2+|b^1_j|
    \end{equation}
    and 
    \begin{equation}
        ||\nabla L^1_j||_2 \leq 1.1||W^1_j||_2.
    \end{equation}
    Since $L^1_j$ is at least twice differentiable, 
    \begin{equation}
        \frac{|L^1_j(\mathbf{y})-L^1_j(\mathbf{x}) - \nabla L^1_j(\mathbf{x})(\mathbf{y}-\mathbf{x})|}{||\mathbf{y}-\mathbf{x}||_2^2} = \frac{|(\mathbf{y}-\mathbf{x})^T\nabla^2 L^1_j(\mathbf{b})(\mathbf{y}-\mathbf{x}))|}{2|\mathbf{x}-\mathbf{y}|_2^2}\leq \frac{1}{2}\beta(\nabla^2 L^1_j(\mathbf{b})),
    \end{equation}
    where $\mathbf{b} = \mathbf{x} + t(\mathbf{y}-\mathbf{x})$ for some $t\in(0, 1)$ and $\beta(A) =  \max\{|\lambda_{\max}(A)|, |\lambda_{\min}(A)|\}$. Similarly,
    \begin{equation}
        \frac{||\nabla L^1_j(\mathbf{y})-\nabla L^1_j(\mathbf{x})||_2}{||\mathbf{y}-\mathbf{x}||_2} = \frac{|\nabla^2 L^1_j(\mathbf{z})(\mathbf{y}-\mathbf{x})|_2}{|\mathbf{x}-\mathbf{y}|_2} \leq ||\nabla^2 L^1_j(\mathbf{z})||_2 = \beta(\nabla^2 L^1_j(\mathbf{z})), 
    \end{equation}
    where $\mathbf{z} = \mathbf{x} + \alpha(\mathbf{y}-\mathbf{x})$ for some $\alpha\in(0, 1)$. Now, \begin{equation}
        \nabla^2 L^1_j = \frac{e^{l^1_j}(2 + 2e^{l^1_j}+l^1_j(1-e^{l^1_j})}{(e^{l^1_j}+1)^3}W^1_j \otimes W^1_j,
    \end{equation}
    which has one non-zero eigenvalue 
    \begin{equation}
        \beta(\nabla^2 L^1_j) = \frac{e^{l^1_j}(2 + 2e^{l^1_j}+l^1_j(1-e^{l^1_j})}{(e^{l^1_j}+1)^3}||W^1_j||_2^2
    \end{equation}
    satisfying
    \begin{equation}
        \max_{\mathbf{x}\in\mathbb{R}^{n_{in}}}\beta(\nabla^2 L^1_j(\mathbf{x})) = 0.5||W^1_j||_2^2.
    \end{equation}
    Therefore, 
    \begin{equation}
        ||L^1_j||_{\text{Lip}(2)} = \max\left\{0.5||W^1_j||_2^2, 1.1||W^1_j||_2, \sqrt{n_{in}}||W^1_j||_2+|b^1_j|\right\},
    \end{equation}
    and
    \begin{equation}
        ||L^1||_{\text{Lip}(2)} = \left(\sum_{j=1}^n\max\left\{0.5||W^1_j||_2^2, 1.1||W^1_j||_2, \sqrt{n_{in}}||W^1_j||_2+|b^1_j|\right\}^2\right)^{\frac{1}{2}}.
    \end{equation}
    The calculations for subsequent layers are very similar, except that the input to each layer is no longer restricted to $A$. For example,
    \begin{equation}
        \begin{aligned}
        \max_{\mathbf{x}\in A}|L^2_j(L^1(\mathbf{x}))| &= \max_{\mathbf{x}\in A}|W^2_j\cdot\text{SiLU}(W^1\mathbf{x}+\mathbf{b}^1)+b^2_j|, \\
        &\leq \left( \sqrt{n_{in}}||W^1||_2+||\mathbf{b}^1||_2\right)||W^2_j||_2+|b^2_j|.
        \end{aligned}
    \end{equation}
    In general
    \begin{equation}
        ||L^i||_{\text{Lip}(2)} \leq \left(\sum_{j=1}^n\max\left\{0.5||W^i_j||_2^2, 1.1||W^i_j||_2, \Gamma^i ||W^i_j||_2 + |b^i_j|\right\}^2\right)^{\frac{1}{2}},
    \end{equation}
    where
    \begin{equation}
        \Gamma^i = \sqrt{n_h}||W^{i-1}||_2\bigg(\cdots\Big(\sqrt{n_h}||W^{2}||_2\big(\sqrt{n_{in}}||W^1||_2+||\mathbf{b}^1||_2\big)+||\mathbf{b}^{2}||_2\Big)+\cdots\bigg)+||\mathbf{b}^{i-1}||_2.
    \end{equation}
\end{proof}

\subsection{Proof of Theorem \ref{thm:lipnn}}

\begin{theorem}
\label{thm:app_lipnn}
    Let $f_{\theta}$ be a FCNN with input dimension $n_{in}$, hidden dimension $n_h$, depth $m$, and activation function $\text{SiLU}$ \citep{ELFWING20183}. Assuming the input $\mathbf{x}=[x_1,\ldots,x_{n_{in}}]^T$ satisfies $|x_j|\leq1$ for $j=1,\ldots,n_{in}$, then $f_{\theta}\in\mathrm{Lip}(2)$ and 
    \begin{equation}
    \label{eq:lip2nnbound_app}
        ||f_{\theta}||_{\text{Lip}(2)} \leq CP_{m!}(||W^1||_2,\ldots,||W^m||_2,||\mathbf{b}^1||_2,\ldots,||\mathbf{b}^m||_2)
    \end{equation}
    where $C$ is a constant depending on $n_{in}, n_h,$ and $m$, $\{W^i\}_{i=1}^m$ and $\{\mathbf{b}^i\}_{i=1}^m$ are the weights and biases of $i^{\text{th}}$ layer of $f_{\theta}$, and $P_{m!}$ is a polynomial of order $m!$.
\end{theorem}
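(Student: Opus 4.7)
The plan is to decompose $f_\theta$ as the $m$-fold composition $f_\theta = L^m \circ \cdots \circ L^1$, apply Lemma \ref{lem:normcomplip2} iteratively to reduce the problem to bounding the $\text{Lip}(2)$-norm of each individual layer, and then invoke Lemma \ref{lem:lip2layer} to express each layer bound as a polynomial in the weight and bias norms. The constant $C$ in the theorem will absorb the dimensional factors $n_{in}, n_h$ and the numerical prefactors from the composition step.

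For the composition step, I would specialise Lemma \ref{lem:normcomplip2} to $\gamma = 2$, for which $1 + 2^\gamma = 5$, and set $T_i := \|L^i \circ \cdots \circ L^1\|_{\text{Lip}(2)}$. This gives the recursion $T_i \leq 5\,\|L^i\|_{\text{Lip}(2)}\max\{1, T_{i-1}^2\}$ with $T_1 = \|L^1\|_{\text{Lip}(2)}$. Unrolling the recursion (each step squares the accumulated bound) yields
\begin{equation*}
\|f_\theta\|_{\text{Lip}(2)} = T_m \leq 5^{2^{m-1}-1} \prod_{i=1}^{m} \|L^i\|_{\text{Lip}(2)}^{2^{m-i}}.
\end{equation*}
As a sanity check, setting every $\|L^i\|_{\text{Lip}(2)} = 1$ recovers the $5^{2^{m-1}-1}$ figure quoted in the text immediately after the theorem. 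At each stage I would also verify the closed-subset hypothesis of Lemma \ref{lem:normcomplip2}, which follows from the boundedness of $\text{SiLU}$ combined with $|x_j| \leq 1$ and the finiteness of the parameters.

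For the per-layer step, Lemma \ref{lem:lip2layer} bounds $\|L^i\|_{\text{Lip}(2)}$ by a sum over output coordinates of squared maxima of three quantities, the dominant of which is $\Gamma^i\|W^i_j\|_2 + |b^i_j|$, where $\Gamma^i$ is itself a polynomial of degree $i-1$ in the earlier weight and bias norms. Replacing the maximum by the sum and pulling the square root outside shows that $\|L^i\|_{\text{Lip}(2)}$ is bounded by a polynomial of degree at most $i$ in $\{\|W^j\|_2, \|\mathbf{b}^j\|_2\}_{j\leq i}$, with coefficients depending only on $n_{in}, n_h$. Substituting these per-layer bounds into the composition estimate gives a polynomial in $\{\|W^i\|_2, \|\mathbf{b}^i\|_2\}_{i=1}^m$, and the $5^{2^{m-1}-1}$ factor together with all dimensional and numerical constants collapse into $C = C(n_{in}, n_h, m)$.

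The main obstacle is the combinatorial bookkeeping of the polynomial degree. Unrolling gives total degree at most $\sum_{i=1}^{m} i \cdot 2^{m-i}$, and I would need to verify this is dominated by $m!$, with any mismatch at small values of $m$ absorbed into $C$; the asymptotic comparison $m! \geq m \cdot 2^{m-1}$ for $m$ moderately large delivers the claim. The analytic content is already encapsulated in the two lemmas, so what remains is essentially a mechanical chaining exercise.
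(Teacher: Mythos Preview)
Your approach is exactly the paper's: invoke Lemma~\ref{lem:lip2layer} for the per-layer $\mathrm{Lip}(2)$ bound and then iterate Lemma~\ref{lem:normcomplip2} to handle the composition; the paper's own proof is literally two sentences doing precisely this, and your unrolled product $5^{2^{m-1}-1}\prod_i \|L^i\|_{\mathrm{Lip}(2)}^{2^{m-i}}$ is the explicit form of what the paper leaves implicit. One caveat on the obstacle you flag: the sum $\sum_{i=1}^m i\,2^{m-i} = 2^{m+1}-m-2$ actually \emph{exceeds} $m!$ for $m\in\{2,3,4\}$, and a degree mismatch cannot be ``absorbed into $C$'' since scaling by a constant does not lower polynomial degree --- the paper's terse proof does not address this either, so read the ``order $m!$'' in the statement as a coarse asymptotic label rather than a sharp degree claim.
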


\begin{proof}
    Lemma \ref{lem:lip2layer} means that each layer $L^j$ of $f_{\theta}$ is $\mathrm{Lip}(2)$ with norm satisfying
    \begin{equation}
        ||L^j||_{\text{Lip}(2)} \leq CP_j(||W_1||_2,\ldots,||W_j||_2,||\mathbf{b}_1||_2,\ldots,||\mathbf{b}_j||_2),
    \end{equation}
    where $C$ is a constant depending on $n$ and $n_{in}$ and $P_j$ is a $j^{\text{th}}$ order polynomial. Applying lemma \ref{lem:normcomplip2} gives the bound in \eqref{eq:lip2nnbound_app}.
\end{proof}

\section{Experimental Details}

\subsection{Stacked Recurrent Models}
\label{app:baseline_models}

The stacked recurrent models considered in this paper are based on the official implementation of S5 located at \url{https://github.com/lindermanlab/S5} \citep{S5}. A recurrent block consists of a batch or layer normalisation, a recurrent layer, a GLU layer \citep{dauphin2017language}, dropout with rate $0.1$, and a skip connection. A full model consists of a linear encoder, a number of stacked recurrent blocks, and a final linear layer. The four different recurrent layers considered are the linear recurrent unit, S5, S6, and MAMBA, where S6 refers to the selective state-space recurrence introduced by \citet{gu2023mamba} and MAMBA refers to the combination of a gated MLP, convolution, and S6 recurrence that was also introduced by \citet{gu2023mamba}. S5 and LRU use batch normalisation, whereas S6 and MAMBA use layer normalisation.

\subsection{CDE Models}
\label{app:cde}

NCDEs, NRDEs, and Log-NCDEs use a single linear layer as $\xi_{\phi}$. NCDEs and NRDEs use FCNNs as their vector fields configured in the same way as their original papers \citep{kidger2020neuralcde, morrill2021neuralrough}. NCDEs use $\text{ReLU}$ activation functions for the hidden layers and a final activation function of $\tanh$. NRDEs use the same, but move the $\tanh$ activation function to be before the final linear layer in the FCNN. Log-NCDEs use a FCNN with SiLU activation functions for the hidden layers and a final activation function of $\tanh$. NRDEs and Log-NCDEs take their intervals $r_{i+1} - r_i$ to be a fixed number of observations, referred to as the Log-ODE step. 

\subsection{Toy Dataset Details}
\label{app:toy_details}

On the toy dataset, all models use a hidden state of dimension $64$ and Adam with a learning rate of $0.0003$ \citep{kingma2017adam}. LRU, S5, S6, and MAMBA use $6$ blocks and S5, S6, and MAMBA use a state dimension of $64$. S5 uses $2$ initialisation blocks and MAMBA uses a convolution dimension of $4$ and an expansion factor of $2$. NCDEs, NRDEs, and Log-NCDEs use a FCNN with width $128$ and depth $3$ as their vector field. Furthermore, they all use Heun as their differential equation solver with a fixed stepsize of $0.01$. NRDEs and Log-NCDEs use a Log-ODE step of $4$ and a signature truncation depth of $2$. Log-NCDEs do not use any $\mathrm{Lip}(\gamma)$ regularisation, i.e. $\lambda=0$.

\subsection{UEA-MTSCA and PPG-DaLiA Details}
\label{app:uea_ppg_details}

Table \ref{tab:UEA_summary} provides details on the dimension, number of observations, and number of classes for the datasets chosen from the UEA-MTSCA for the experiments conducted in this paper. Care is necessary when using the EigenWorms dataset. As of June $1^{\text{st}}$ 2024, the train and test splits obtained from \url{https://timeseriesclassification.com/description.php?Dataset=EigenWorms} contain repeated time series. The repeated data was removed for the experiments in this paper.
\begin{table}
\centering
\caption{A summary of the subset of the UEA-MTSCA datasets used in this paper.}
\vspace{0.2cm}
\begin{tabular}{l|c c c c}
Dataset & Dimension & Number of Observations & Classes \\ \hline
EigenWorms & $6$ & $17984$ & $5$ \\
EthanolConcentration & $3$ & $1751$ & $4$ \\
Heartbeat & $61$ & $405$ & $2$ \\
MotorImagery & $64$ & $3000$ & $2$ \\
SelfRegulationSCP1 & $6$ & $896$ & $2$ \\
SelfRegulationSCP2 & $7$ & $1152$ & $2$ 
\end{tabular}
\label{tab:UEA_summary}
\end{table}
Tables \ref{tab:UEA_hypopt_recurrent} and \ref{tab:UEA_hypopt_ncde} give an overview of the hyperparameters optimised over during the UEA-MTSCA and PPG-DaLiA experiments for the stacked recurrent models and the NCDE models respectively. The optimisation was performed using a grid search of the validation accuracy for the UEA-MTSCA datasets and the mean squared error for the PPG-DaLiA dataset. All models and experiments used Adam as their optimiser and a batch size of $32$, except the stacked recurrent models on PPG-DaLiA, which used a batch size of $4$ due to memory constraints. NCDEs, NRDEs, and Log-NCDEs use Heun as their differential equation solver with a fixed stepsize of $1 / \max\{500, 1 + (\text{Time series length} / \text{Log-ODE step})\}$, with $\text{Log-ODE step}=1$ for NCDEs. Additionally, Log-NCDEs scale down their initial FCNN parameters by a factor of $1000$ to reduce the starting $\mathrm{Lip}(2)$ norm of the vector field.

\begingroup
\renewcommand{\arraystretch}{1.2}
\begin{table}
\caption{Hyperparameters selected by the optimisation for LRU, S5, S6, and MAMBA on the UEA-MTSCA datasets and PPG-DaLiA dataset. The following abbreviations are used: EigenWorms (EW), EthanolConcentration (EC), Heartbeat (HB), MotorImagery (MI), SelfRegulationSCP1 (SCP1), SelfRegulationSCP2 (SCP2), and PPG-DaLiA (PPG). A \ding{55} denotes that the hyperparameter is not applicable to that model.}
\label{tab:UEA_hypopt_recurrent}
\vspace{0.2cm}
\centering
\begin{tabular}{l|l|c|c|c|c}
\multirow{2}{*}{Hyperparameters} & \multirow{2}{*}{Options} & \multicolumn{4}{c}{Method} \\ \cline{3-6}
 &  & LRU & S5 & S6 & MAMBA \\ \Xhline{2\arrayrulewidth}
Learning Rate & $10^{-3}$ & \makecell{EW, MI, SCP1, \\ SCP2, PPG} & \makecell{HB, MI, SCP1, \\ PPG} & \makecell{EW, HB, MI, \\ SCP2} & EW, EC, PPG \\ \cline{2-6}
 & $10^{-4}$ & HB & EW, SCP2 & SCP1, PPG & HB, SCP2 \\ \cline{2-6}
 & $10^{-5}$ & EC & EC & EC & MI, SCP1 \\ \cline{2-6} \Xhline{2\arrayrulewidth}
Include Time & True & EC, HB, SCP2 & \makecell{EW, EC, SCP2, \\ PPG} & \makecell{EC, HB, MI, \\ PPG} & EW, EC, SCP2 \\ \cline{2-6}
 & False & \makecell{EW, MI, SCP1, \\ PPG} & HB, MI, SCP1 & EW, SCP1, SCP2 & \makecell{HB, MI, SCP1, \\ PPG} \\ \cline{2-6} \Xhline{2\arrayrulewidth}
Hidden Dimension & 16 & MI & MI, SCP2, PPG & \makecell{EW, EC, HB, \\ MI, SCP2} & EW \\ \cline{2-6}
 & 64 & \makecell{EW, EC, SCP1, \\ SCP2} & EW & SCP1, PPG & EC, HB, SCP2 \\ \cline{2-6}
 & 128 & HB, PPG & EC, HB, SCP1 & & MI, SCP1, PPG \\ \cline{2-6} \Xhline{2\arrayrulewidth}
Number of Layers & 2 & HB, SCP1, SCP2 & EW, EC, SCP2 & SCP1, SCP2, PPG & MI, SCP1 \\ \cline{2-6}
 & 4 & EW & HB & EW, EC, HB, MI & EC, HB, PPG \\ \cline{2-6}
 & 6 & EC, MI, PPG & MI, SCP1, PPG & & EW, SCP2 \\ \cline{2-6} \Xhline{2\arrayrulewidth}
State Dimension & 16 & EC, SCP1, SCP2 & \makecell{EW, EC, HB, \\ SCP1} & EC, HB, SCP1 & SCP1 \\ \cline{2-6}
 & 64 & EW & MI, SCP2, PPG & EW, PPG & \makecell{EW, MI, SCP2, \\ PPG} \\ \cline{2-6}
 & 256 & HB, MI, PPG & & MI, SCP2 & EC, HB \\ \cline{2-6} \Xhline{2\arrayrulewidth}
S5 Initialisation Blocks & 2 & \ding{55} & SCP2, PPG & \ding{55} & \ding{55} \\ \cline{2-6}
 & 4 & \ding{55} & HB, MI & \ding{55} & \ding{55} \\ \cline{2-6}
 & 8 & \ding{55} & EW, EC, SCP1 & \ding{55} & \ding{55} \\ \cline{2-6} \Xhline{2\arrayrulewidth}
Convolution Dimension & 2 & \ding{55} & \ding{55} & \ding{55} & EW, HB, SCP2 \\ \cline{2-6}
 & 3 & \ding{55} & \ding{55} & \ding{55} & MI, PPG \\ \cline{2-6}
 & 4 & \ding{55} & \ding{55} & \ding{55} & EC, SCP1 \\ \cline{2-6} \Xhline{2\arrayrulewidth}
Expansion Factor & 1 & \ding{55} & \ding{55} & \ding{55} & EW, MI, SCP1 \\ \cline{2-6}
 & 2 & \ding{55} & \ding{55} & \ding{55} & SCP2, PPG \\ \cline{2-6}
 & 4 & \ding{55} & \ding{55} & \ding{55} & EC, HB \\ \cline{2-6} \Xhline{2\arrayrulewidth}

\end{tabular}
\end{table}
\endgroup

\begingroup
\renewcommand{\arraystretch}{1.2}
\begin{table}
\caption{Hyperparameters selected by the optimisation for NCDE, NRDE, and Log-NCDE on the UEA-MTSCA datasets and PPG-DaLiA dataset. Given the length of each timeseries in the PPG-DaLiA dataset, different choices were considered for the Log-ODE depth and step, which are shown here in red. The following abbreviations are used: EigenWorms (EW), EthanolConcentration (EC), Heartbeat (HB), MotorImagery (MI), SelfRegulationSCP1 (SCP1), SelfRegulationSCP2 (SCP2), and PPG-DaLiA (PPG). A \ding{55} denotes that the hyperparameter is not applicable to that model.}
\label{tab:UEA_hypopt_ncde}
\vspace{0.2cm}
\centering
\begin{tabular}{l|l|c|c|c}
\multirow{2}{*}{Hyperparameters} & \multirow{2}{*}{Options} & \multicolumn{3}{c}{Method} \\ \cline{3-5}
 &  & NCDE & NRDE & Log-NCDE \\ \Xhline{2\arrayrulewidth}
Learning Rate & $10^{-3}$ & \makecell{EW, EC, HB, \\ MI, SCP2, PPG} & \makecell{EW, EC, HB, \\ SCP1, PPG} & \makecell{EW, HB, MI, \\ PPG} \\ \cline{2-5}
 & $10^{-4}$ & SCP1 & MI, SCP2 & EC, SCP1, SCP2 \\ \cline{2-5}
 & $10^{-5}$ & & & \\ \cline{2-5} \Xhline{2\arrayrulewidth}
Include Time & True & \makecell{EW, EC, HB, \\ MI, PPG} & \makecell{EC, SCP1, SCP2, \\ PPG} & \makecell{EW, EC, HB, \\ PPG} \\ \cline{2-5}
 & False & SCP1, SCP2 & EW, HB, MI & MI, SCP1, SCP2 \\ \cline{2-5} \Xhline{2\arrayrulewidth}
Hidden Dimension & 16 & MI, PPG & & HB, MI \\ \cline{2-5}
 & 64 & & \makecell{EW, EC, HB, \\ SCP1} & EC, SCP1 \\ \cline{2-5}
 & 128 & \makecell{EW, EC, HB, \\ SCP1, SCP2} & MI, SCP2, PPG & EW, SCP2, PPG \\ \cline{2-5} \Xhline{2\arrayrulewidth}
Vector Field (Depth, Width) & (2, 32) & & & EW, SCP2 \\ \cline{2-5}
 & (3, 64) & & EW & EC, MI, PPG \\ \cline{2-5}
 & (3, 128) & EW & HB & HB, SCP1 \\ \cline{2-5}
 & (4, 128) & \makecell{EC, HB, MI, \\ SCP1, SCP2, PPG} & \makecell{EC, MI, SCP1, \\ SCP2, PPG} & \\ \cline{2-5} \Xhline{2\arrayrulewidth}
Log-ODE (Depth, Step) & (1, 1) & \ding{55} & EC, MI, SCP1, SCP2 & EC \\ \cline{2-5}
 & (2, 2) & \ding{55} & HB & HB \\ \cline{2-5}
 & (2, 4) & \ding{55} & EW & SCP2 \\ \cline{2-5}
 & (2, 8) & \ding{55} & & \\ \cline{2-5}
 & (2, 12) & \ding{55} & & EW \\ \cline{2-5}
 & (2, 16) & \ding{55} & & MI, SCP1 \\ \cline{2-5}
 & \textcolor{red}{(1, 10)} & \ding{55} & PPG & PPG \\ \cline{2-5}
 & \textcolor{red}{(2, 10)} & \ding{55} & & \\ \cline{2-5}
 & \textcolor{red}{(2, 100)} & \ding{55} & & \\ \cline{2-5}
 & \textcolor{red}{(2, 1000)} & \ding{55} & & \\ \cline{2-5} \Xhline{2\arrayrulewidth}
Regularisation $\lambda$ & $10^{-3}$ & \ding{55} & \ding{55} & EW, MI, SCP2 \\ \cline{2-5}
 & $10^{-6}$ & \ding{55} & \ding{55} & EC, HB \\ \cline{2-5}
 & $0.0$ & \ding{55} & \ding{55} & SCP1, PPG \\ \cline{2-5} \Xhline{2\arrayrulewidth}

\end{tabular}
\end{table}
\endgroup

\subsection{Additional Memory and run time Results}
\label{app:mem_time}

Models are compared on their average GPU memory usage and run time for the UEA-MTSCA datasets. In order to compare the models, $1000$ steps of training were run on an NVIDIA RTX 4090 with each model using the hyperparameters obtained from the hyperparameter optimisation. The specific choices of the hyperparameters are listed in Tables \ref{tab:UEA_hypopt_recurrent} and \ref{tab:UEA_hypopt_ncde}. In addition to the time for $1000$ steps and GPU memory usage, shown in Figures \ref{fig:mem} and \ref{fig:time} respectively, the average number of total training steps taken to produce the results in Table \ref{tab:UEA_results_hypopt} is recorded in Figure \ref{fig:num_steps}. Combining the results for time for $1000$ training steps and total number of training steps gives an approximation of the total run time on the same hardware, and these results are shown in Figure \ref{fig:total_time}.

Although the time per training step is lower for stacked recurrent models than NCDEs, NRDEs, or Log-NCDEs, they also require more training steps to converge. Additionally, NCDEs, NRDEs, and Log-NCDEs require less GPU memory. The largest contributors to the average run time of NCDEs are the datasets with the most observations, EigenWorms and MotorImagery. The positive impact of the Log-ODE method on computational burden is demonstrated empirically by the decrease in run time achieved by NRDEs and Log-NCDEs on EigenWorms when using a depth$-2$ Log-ODE method. When a depth$-1$ Log-ODE method is used, such as NRDEs on MotorImagery, the same decrease is not observed. Section \ref{sec:cost} demonstrated that Log-NCDEs and NRDEs have the same asymptotic computational complexity. However, when using a depth$-2$ Log-ODE approximation and the same stepsize, NRDEs and Log-NCDEs exhibit drastically different run times on Heartbeat, a high-dimensional dataset. This difference is partly explained by the model's having different optimal hyperparameter choices, but even when using identical hyperparameters to the NRDE, Log-NCDE's run time for $1000$ steps of training is $1673$ seconds, whereas NRDE's run time is $9539$ seconds. The remaining difference in run time is due to calculating the JVPs of $f_{\theta}$ using a batched function, as discussed in Section \ref{sec:cost}. If instead the JVPs are calculated using a for-loop, then Log-NCDEs run time increases to $17045$ seconds.

\begin{figure}[h]
\centering
    \begin{subfigure}{0.45\textwidth}
        \includegraphics[width=\linewidth]{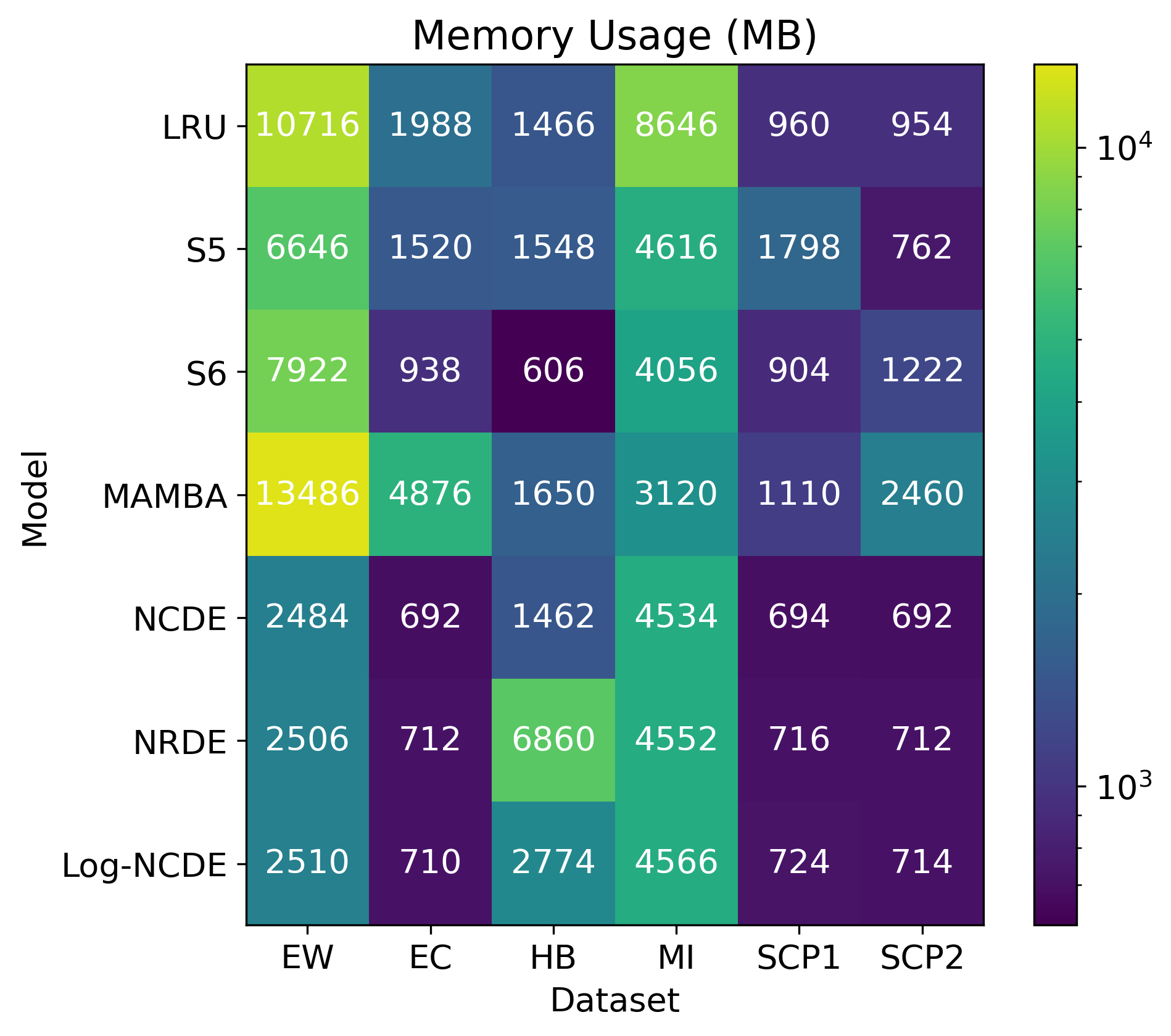}
        \caption{Memory}
        \label{fig:mem}
    \end{subfigure}
    \begin{subfigure}{0.45\textwidth}
        \includegraphics[width=\linewidth]{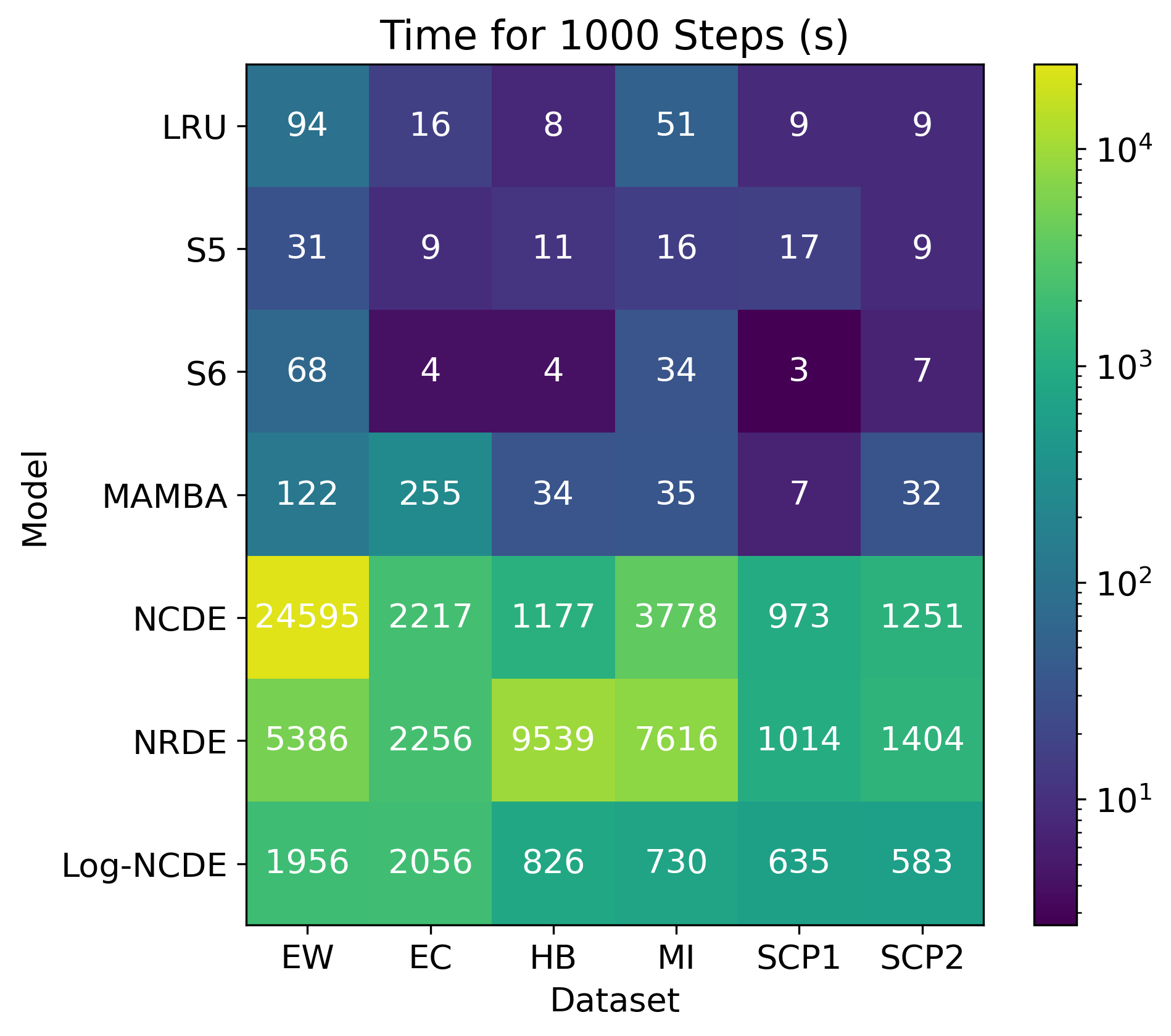}
        \caption{Time}
        \label{fig:time}
    \end{subfigure}

    \medskip
    \begin{subfigure}{0.45\textwidth}
        \includegraphics[width=\linewidth]{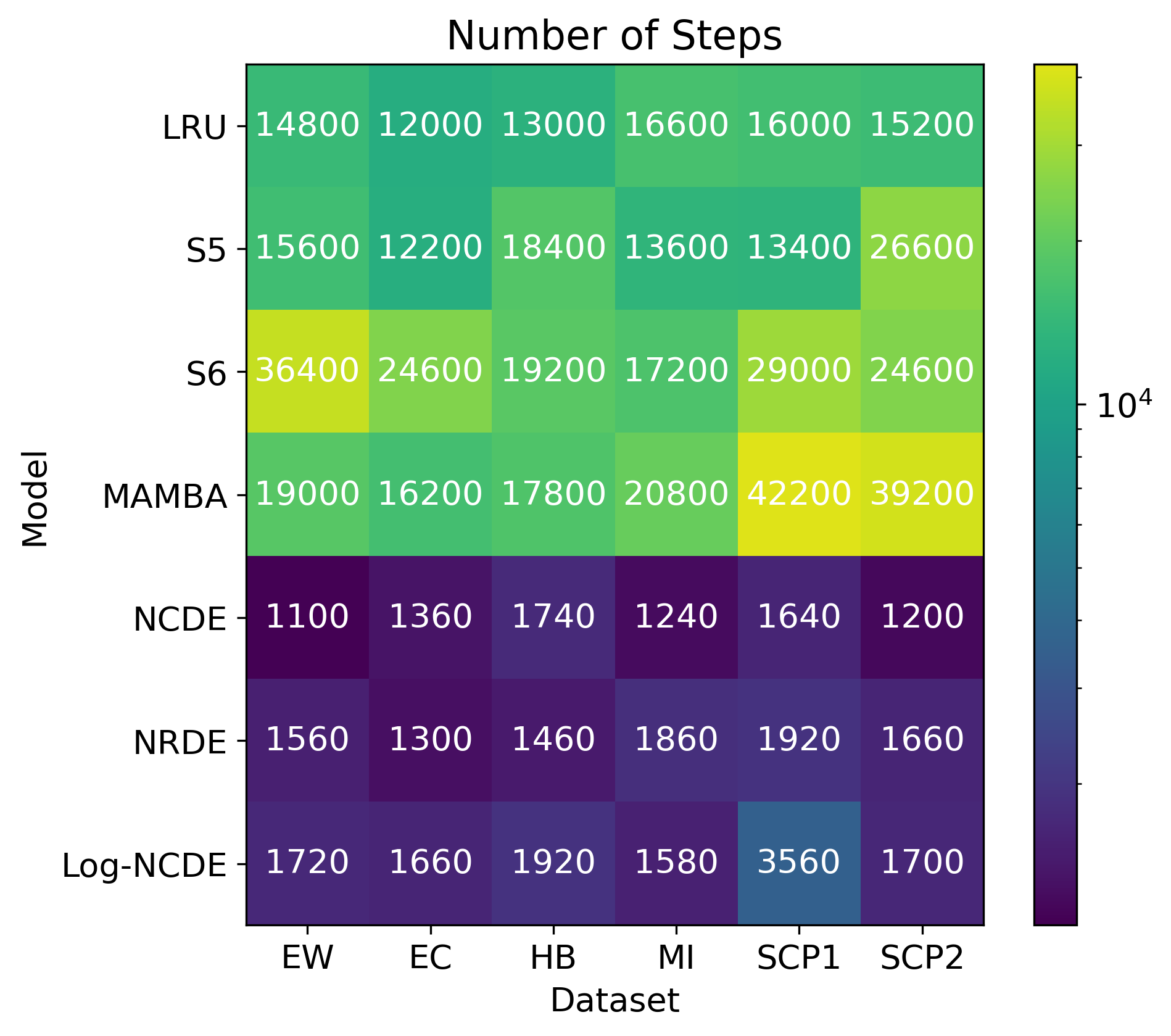}
        \caption{Number of steps}
        \label{fig:num_steps}
    \end{subfigure}
    \begin{subfigure}{0.45\textwidth}
        \includegraphics[width=\linewidth]{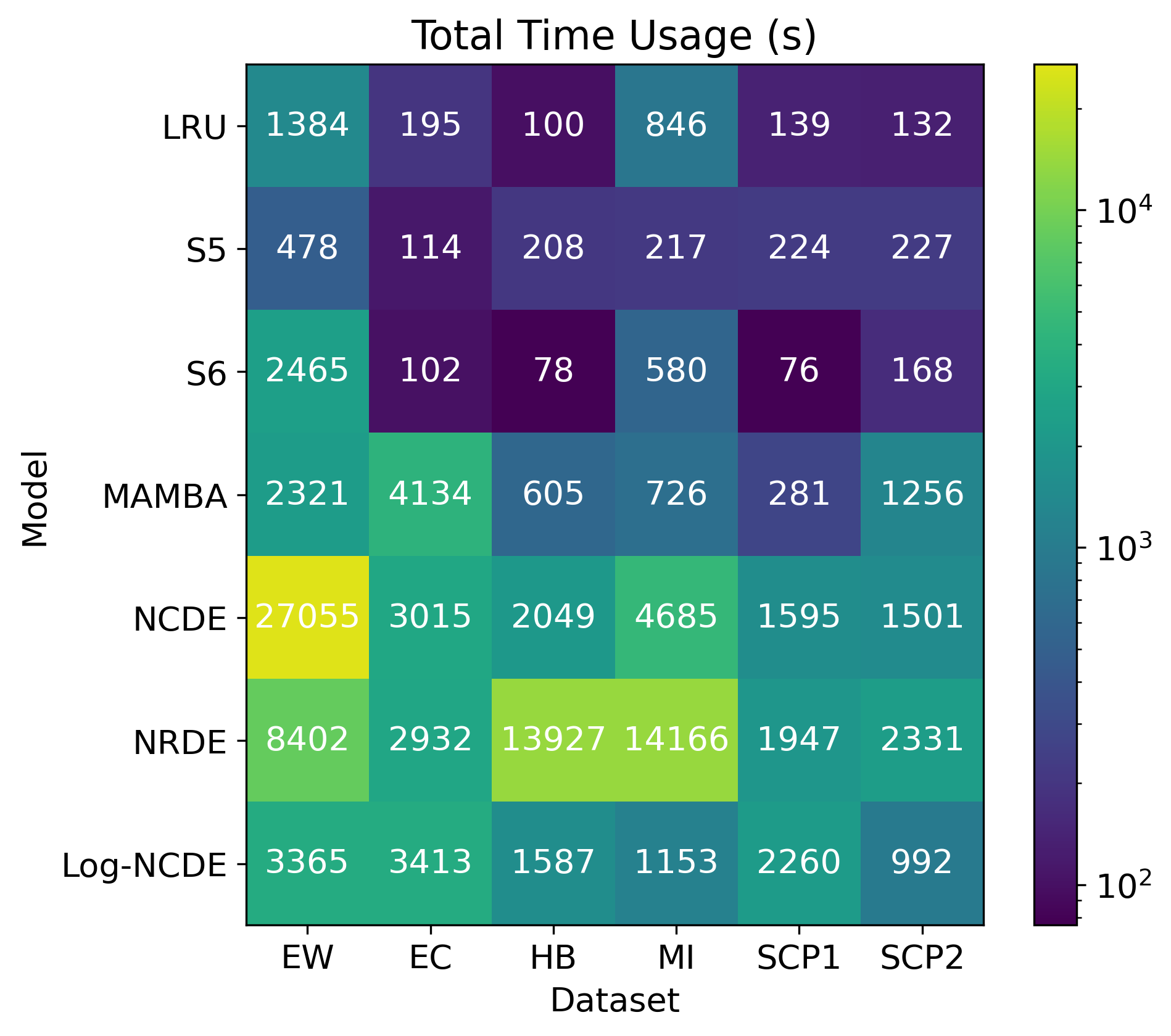}
        \caption{Total time}
        \label{fig:total_time}
    \end{subfigure}
    \caption{Memory, time for $1000$ steps, number of steps, and approximate total time for each model and dataeset from the UEA-MTSCA considered in this paper on an NVIDIA RTX 4090. The following abbreviations are used: EigenWorms (EW), EthanolConcentration (EC), Heartbeat (HB), MotorImagery (MI), SelfRegulationSCP1 (SCP1), and SelfRegulationSCP2 (SCP2).}
    \label{fig:mem_time}
\end{figure}

\end{document}